\def\cola#1#2\endcola{\stackengine{0pt}{}{$\overleftarrow{\textnormal{#1}}$}{O}{l}{F}{T}{L}%
  \phantom{#1}#2}
\def\cola#1#2\endcola{\stackengine{0pt}{}{$\overrightarrow{\textnormal{#1}}$}{O}{l}{F}{T}{L}%
  \phantom{#1}#2}
\DeclareMathAlphabet{\mathbfcal}{OMS}{cmsy}{b}{n}
\title{Signed Graph Diffusion Network
}
\author{%
  Jinhong Jung \\
  Jeonbuk National University\\
  \texttt{jinhongjung@jbnu.ac.kr} \\
  \And
  Jaemin Yoo \\
  Seoul National University\\
  \texttt{jaeminyoo@snu.ac.kr} \\
  \And
  U Kang \\
  Seoul National University\\
  \texttt{ukang@snu.ac.kr} \\
}
\begin{document}

\maketitle

\newtheorem{observation}{Observation}
\newtheorem{problem}{Problem}
\newtheorem{property}{Property}
\newtheorem{corollary}{Corollary}
\newtheorem{theorem}{Theorem}
\newtheorem{proof}{Proof}


\renewcommand{\algorithmicrequire}{\textbf{Input:}}
\renewcommand{\algorithmicensure}{\textbf{Output:}}

\newcommand*{\QEDA}{\hfill\ensuremath{\blacksquare}}%
\newcommand*{\QEDB}{\hfill\ensuremath{\Box}}%

\def\NoNumber#1{{\def\alglinenumber##1{}\State #1}\addtocounter{ALG@line}{-1}}
\newcommand{\mat}[1]{\mathbf{#1}}
\newcommand{\matt}[1]{\mathbf{#1}^{\top}}
\newcommand{\mattt}[1]{\mathbf{\tilde{#1}}^{\top}}
\newcommand{\mati}[1]{\mathbf{#1}^{-1}}
\newcommand{\vect}[1]{\mathbf{#1}}
\newcommand{\vectt}[1]{\mathbf{#1}^{\top}}
\newcommand{\set}[1]{\textnormal{#1}}

\newcommand{\szV}[0]{|\set{V}|}
\newcommand{\szE}[0]{|\set{E}|}
\newcommand{\szEp}[0]{|\set{E}^{+}|}
\newcommand{\szEn}[0]{|\set{E}^{-}|}

\newcommand{\A}[0]{\mat{A}} 
\newcommand{\aA}[1]{\mathbfcal{A}^{(#1)}} 
\newcommand{\X}[0]{\mat{X}}
\renewcommand{\H}[1]{\mat{H}^{(#1)}} %
\renewcommand{\d}[1]{{d}_{#1}} %
\newcommand{\tH}[1]{\mat{\tilde{H}}^{(#1)}} %
\newcommand{\W}[1]{\mat{W}^{(#1)}} %
\newcommand{\dP}[1]{\mathbfcal{P}^{(#1)}}
\newcommand{\dM}[1]{\mathbfcal{M}^{(#1)}}

\newcommand{\T}[1]{\mat{T}^{(#1)}} %
\newcommand{\smf}[1]{\small#1\normalsize\xspace} %

\newcommand{\G}[0]{\mathcal{G}} 
\newcommand{\aG}[1]{\mathbfcal{G}^{(#1)}} 

\newcommand{\as}[1]{\vect{a}_{s}^{(#1)}} %
\newcommand{\thl}[2]{\vect{\tilde{h}}_{#1}^{(#2)}}

\newcommand{\nA}[0]{\mat{\tilde{A}}}
\newcommand{\nAT}[0]{\matt{\tilde{A}}}

\newcommand{\nAp}[0]{\mat{\tilde{A}}_{+}}
\newcommand{\nApT}[0]{\nAp^{\top}}
\newcommand{\nAn}[0]{\mat{\tilde{A}}_{-}}
\newcommand{\nAnT}[0]{\nAn^{\top}}

\newcommand{\nB}[0]{\mat{\tilde{B}}}
\newcommand{\Q}[0]{\mat{Q}}
\newcommand{\tQ}[0]{\mat{\tilde{Q}}}

\renewcommand{\P}[1]{\mat{P}^{(#1)}}
\newcommand{\M}[1]{\mat{M}^{(#1)}} 

\newcommand{\rp}[1]{p_{#1}}
\newcommand{\rmm}[1]{m_{#1}}

\newcommand{\rpt}[2]{{p}_{#1}^{(#2)}}
\newcommand{\rmt}[2]{{m}_{#1}^{(#2)}}

\newcommand{\pvk}[2]{\vect{p}_{#1}^{(#2)}}
\newcommand{\mvk}[2]{\vect{m}_{#1}^{(#2)}}
\newcommand{\pvkt}[2]{\vect{p}_{#1}^{(#2)\top}}
\newcommand{\mvkt}[2]{\vect{m}_{#1}^{(#2)\top}}

\newcommand{\thvl}[2]{\vect{\tilde{h}}^{(#2)}_{#1}} %

\newcommand{\Np}[1]{\set{N}_{#1}^{+}}
\newcommand{\Nm}[1]{\set{N}_{#1}^{-}}

\newcommand{\ONp}[1]{\overrightarrow{\set{N}}_{#1}^{+}}
\newcommand{\ONm}[1]{\overrightarrow{\set{N}}_{#1}^{-}}

\newcommand{\ON}[1]{\overrightarrow{\set{N}}_{#1}}

\newcommand{\INp}[1]{\overleftarrow{\set{N}}_{#1}^{+}}
\newcommand{\INm}[1]{\overleftarrow{\set{N}}_{#1}^{-}}

\newcommand{\red}[1]{{\color{red} #1}}
\newcommand{\blue}[1]{{\color{blue} #1}}
\newcommand{\orange}[1]{{\color{orange} #1}}

\newcommand{\method}{\textsc{SGDNet}\xspace}
\newcommand{\fullmethod}{\textsc{Signed Graph Diffusion Network}\xspace}

\begin{abstract}
	Given a signed social graph, how can we learn appropriate node representations to infer the signs of missing edges?
Signed social graphs have received considerable attention to model trust relationships.
Learning node representations is crucial to effectively analyze graph data, and various techniques such as network embedding and graph convolutional network (GCN) have been proposed for learning signed graphs.
However, traditional network embedding methods are not end-to-end for a specific task such as link sign prediction, and GCN-based methods suffer from a performance degradation problem when their depth increases.
In this paper, we propose \fullmethod (\method), a novel graph neural network that achieves end-to-end node representation learning for link sign prediction in signed social graphs.
We propose a random walk technique specially designed for signed graphs so that \method effectively diffuses hidden node features.
Through extensive experiments, we demonstrate that \method outperforms state-of-the-art models in terms of link sign prediction accuracy. 
\end{abstract}

\section{Introduction}
\label{sec:introduction}
Given a signed social graph, how can we learn appropriate node representations
to infer the signs of missing edges?
Signed social graphs model trust relationships between people with positive (trust) and negative (distrust) edges.
Many online social services such as Epinions~\citep{guha2004propagation} and Slashdot~\citep{kunegis2009slashdot} that allow users to express their opinions are naturally represented as signed social graphs.
Such graphs have attracted considerable attention for diverse applications including
link sign prediction~\citep{leskovec2010predicting,kumar2016edge},
node ranking~\citep{jung2016personalized,DBLP:journals/kais/JungJK20,li2019supervised},
community analysis~\citep{yang2007community,chu2016finding},
graph generation~\citep{derr2018signedmodel,DBLP:conf/edbt/JungHU20},
recommender system~\citep{conf/cikm/JangFKKH16,conf/icdm/YooJK17},
and anomaly detection~\citep{kumar2014accurately}.
Node representation learning is a fundamental building block for analyzing graph data, and many researchers have put tremendous efforts into developing effective models for unsigned graphs.
Graph convolutional networks (GCN) and their variants~\citep{DBLP:conf/iclr/KipfW17, DBLP:conf/iclr/VelickovicCCRLB18} have spurred great attention in machine learning community, and recent works~\citep{DBLP:conf/iclr/KlicperaBG19,li2019deepgcns} have demonstrated stunning progress by handling the performance degradation caused by over-smoothing~\citep{li2018deeper,oono2020graph} (i.e., node representations become indistinguishable as the number of propagation increases) or the vanishing gradient problem~\citep{li2019deepgcns} in the first generation of GCN models.
However, all of these models have a limited performance on node representation learning in signed graphs since they only consider unsigned edges under the homophily assumption~\citep{DBLP:conf/iclr/KipfW17}.

Many studies have been recently conducted to consider such signed edges, and they are categorized into network embedding and GCN-based models.
Network embedding~\citep{KimPLK18,xu2019link} learns the representations of nodes by optimizing an unsupervised loss that primarily aims to locate two nodes' embeddings closely (or far) if they are positively (or negatively) connected.
However, they are not trained jointly with a specific task in an end-to-end manner, i.e., latent features and the task are trained separately.
Thus, their performance is limited unless each of them is tuned delicately.
GCN-based models~\citep{derr2018signed,li2020learning} have extended the graph convolutions to signed graphs using balance theory~\citep{holland1971transitivity} in order to properly propagate node features on signed edges.
However, these models are directly extended from existing GCNs without consideration of the over-smoothing problem that degrades their performance.
This problem hinders them from exploiting more information from multi-hop neighbors for learning node features in signed graphs.

{
We propose \method (\fullmethod), a novel graph neural network for node representation learning in signed graphs.
Our main contributions are summarized as follows:
\begin{itemize}
	\item {\textbf{End-to-end learning.}
	We design \method that performs end-to-end node representation learning.
	Given a signed graph, \method produces node embeddings through multiple signed graph diffusion (SGD) layers (Figure~\ref{fig:overview:multiple}), which are fed into a loss function of a specific task such as link sign prediction. 
	}
	\item {\textbf{Novel feature diffusion.}
	We propose a signed random walk diffusion method that propagates node embeddings on signed edges based on random walks considering signs, and injects local features (Figure~\ref{fig:overview:srwdiff}).
	This enables \method to learn distinguishable node representations considering multi-hop neighbors while preserving local information.
	}
	\item {\textbf{Experiments.}
	Extensive experiments show that \method effectively learns node representations of signed social graphs for link sign prediction, giving at least 3.9\% higher accuracy than the state-of-the-art models in real datasets (Table~\ref{tab:link_sign:auc}).
	}
\end{itemize}
}

\setlength{\textfloatsep}{0.2cm}
\begin{figure*}[t]
    \centering
    \vspace{-4mm}
    \subfigure[\method]{
    	\hspace{-4mm}
    	\label{fig:overview:multiple}
        \includegraphics[width=0.2\linewidth]{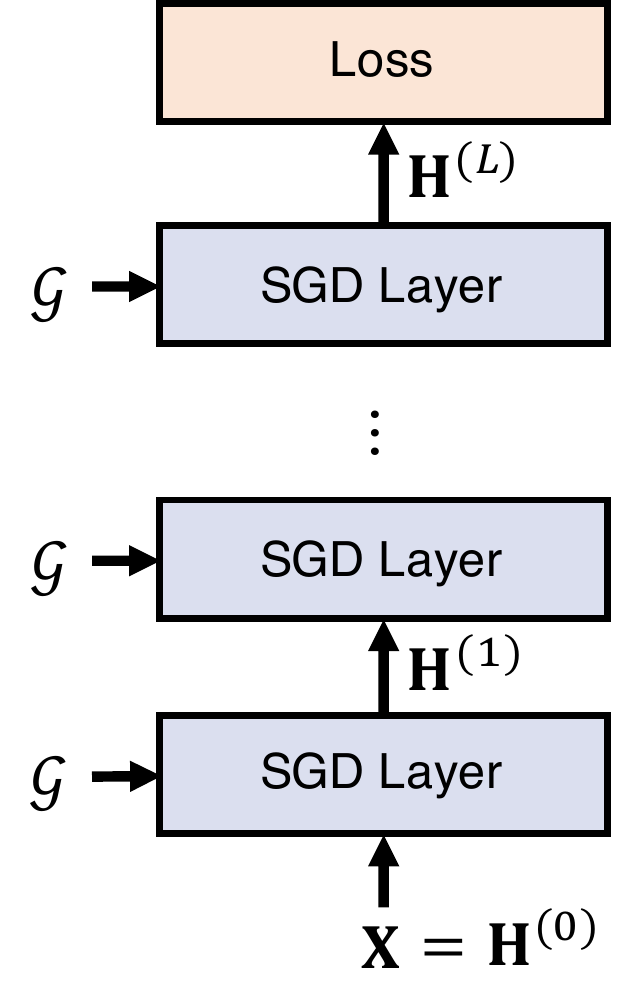}
    }
    \subfigure[$l$-th SGD layer]{
    	\hspace{10mm}
    	\label{fig:overview:single}
        \includegraphics[width=0.32\linewidth]{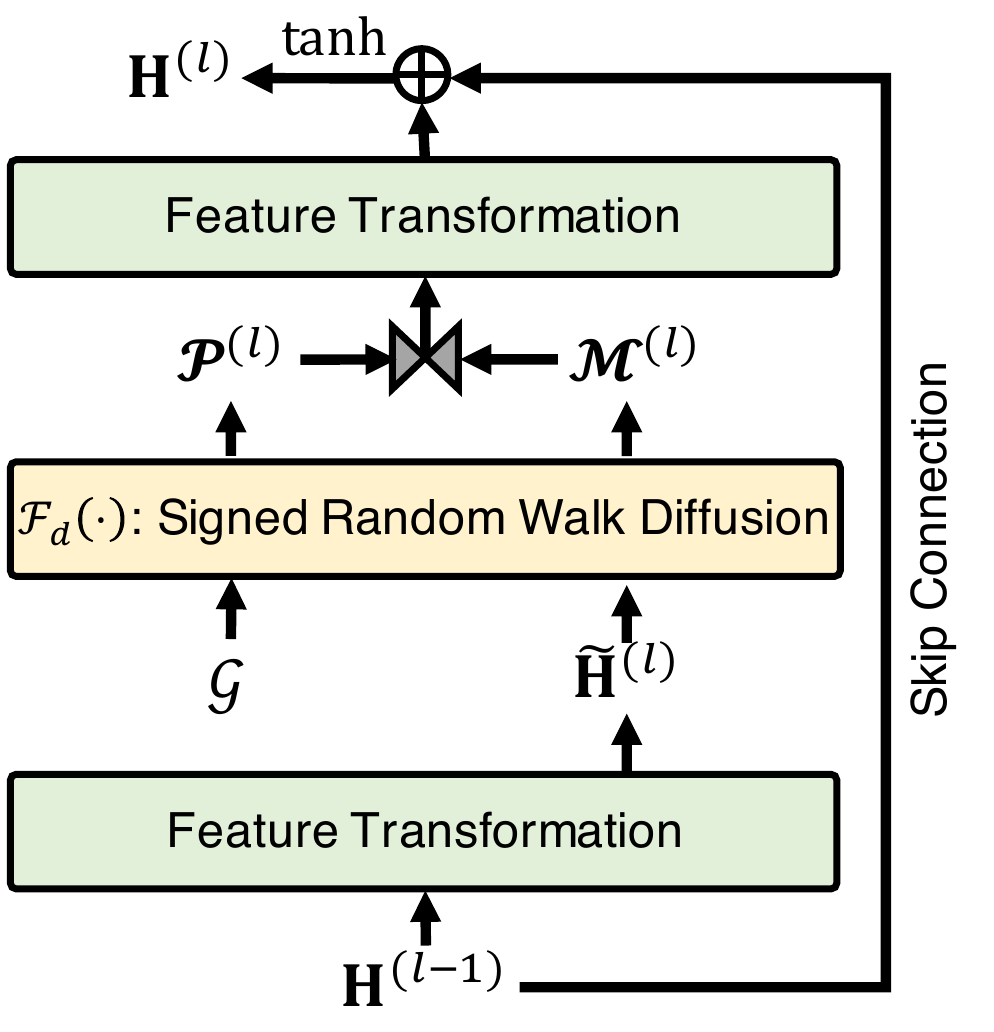}
        \hspace{2mm}
    }
    \subfigure[Signed random walk diffusion]{
    	\hspace{-2mm}
    	\label{fig:overview:srwdiff}
        \includegraphics[width=0.31\linewidth]{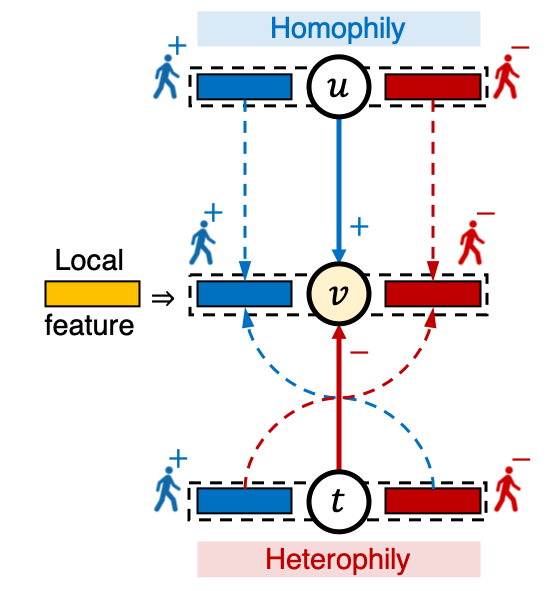}
    }
    \caption{
        \label{fig:overview}
        Overall architecture of \method.
        {
        (a) Given a signed graph $\G$ and initial node features $\X$,
        \method with multiple SGD layers produces the final embeddings $\H{L}$, which is fed to a loss function under an end-to-end framework.
        (b) A single SGD layer learns node embeddings based on signed random walk diffusion. 
        (c) Our diffusion module aggregates the features of node $v$ so that they are similar to those connected by $+$ edges (e.g., node $u$), and different from those connected by $-$ edges (e.g., node $t$).
        Also, it injects the local feature (i.e., the input feature of each module) of node $v$ at each aggregation to make the aggregated features distinguishable.
        }
    }
\end{figure*}

\section{Related Work}
\label{sec:related}
\subsection{Graph Convolutional Networks on Unsigned Graphs}
Graph convolutional network (GCN)~\citep{DBLP:conf/iclr/KipfW17} models the latent representation of a node by employing a convolutional operation on the features of its neighbors.
Various GCN-based approaches~\citep{DBLP:conf/iclr/KipfW17,DBLP:conf/iclr/VelickovicCCRLB18,hamilton2017inductive} have aroused considerable attention since they enable diverse graph supervised tasks~\citep{DBLP:conf/iclr/KipfW17,yao2019graph,DBLP:conf/iclr/XuHLJ19} to be performed concisely under an end-to-end framework.
However, the first generation of GCN models exhibit performance degradation due to the over-smoothing and vanishing gradient problems.
Several works~\citep{li2018deeper,oono2020graph} have theoretically revealed the over-smoothing problem.
Also, Li et al.~\citep{li2019deepgcns} have empirically shown that stacking more GCN layers leads to the vanishing gradient problem as in convolutional neural networks~\citep{he2016deep}.
Consequently, most GCN-based models~\citep{DBLP:conf/iclr/KipfW17,DBLP:conf/iclr/VelickovicCCRLB18,hamilton2017inductive} are shallow; i.e., they do not use the feature information in faraway nodes when modeling node embeddings.


A recent research direction aims at resolving the limitation. 
Klicpera et al.~\citep{DBLP:conf/iclr/KlicperaBG19} proposed APPNP exploiting Personalized PageRank~\citep{jeh2003scaling} to not only propagate hidden node embeddings far but also preserve local features, thereby preventing aggregated features from being over-smoothed.
Li et al.~\citep{li2019deepgcns} suggested ResGCN adding skip connections between GCN layers, as in ResNet~\citep{he2016deep}. 
However, all of these models do not provide how to use signed edges since they are based on the homophily assumption~\citep{DBLP:conf/iclr/KipfW17}, i.e., users having connections are likely to be similar,
which is not valid for negative edges.
As opposed to the homophily, negative edges have the semantics of heterophily~\citep{rogers2010diffusion}, i.e., users having connections are dissimilar.
Although these methods can still be applied to signed graphs by ignoring the edge signs, their trained features have limited capacity. 

\subsection{Network Embedding and Graph Convolutional Networks on Signed Graphs}
Traditional methods on network embedding extract latent node features specialized for signed graphs in an unsupervised manner.
Kim et al.~\citep{KimPLK18} proposed SIDE which optimizes a likelihood over direct and indirect signed connections on truncated random walks sampled from a signed graph.
Xu et al.~\citep{xu2019link} developed SLF considering positive, negative,  and non-linked relationships between nodes to learn non-negative node embeddings.
However, such approaches are not end-to-end, i.e., they are not directly optimized for solving a supervised task such as link prediction.
%

There are recent progresses on end-to-end learning on signed networks under the GCN framework.
Derr et al.~\citep{derr2018signed} proposed SGCN which extends the GCN mechanism to signed graphs considering balanced and unbalanced relationships supported by structural balance theory~\citep{holland1971transitivity}.
Yu et al.~\citep{li2020learning} developed SNEA using attention techniques to reveal the importance of these relationships.
However, such state-of-the-art models do not consider the over-smoothing problem since they are directly extended from GCN.


\section{Proposed Method}
\label{sec:proposed}
We propose \method (\fullmethod), a novel end-to-end model for node representation learning in signed graphs.
Our \method aims to properly aggregate node features on signed edges, and to effectively use the features of multi-hop neighbors so that generated features are not over-smoothed.
Our main ideas are to diffuse node features along random walks considering the signs of edges, and to inject local node features at each aggregation.

Figure~\ref{fig:overview} depicts the overall architecture of \method. 
Given a signed graph $\G$ and initial node features \smf{$\X \in \mathbb{R}^{n \times \d{0}}$} as shown in Figure~\ref{fig:overview:multiple}, \method extracts the final node embeddings \smf{$\H{L} \in \mathbb{R}^{n \times \d{L}}$} through multiple SGD layers where $n$ is the number of nodes, $L$ is the number of SGD layers, and \smf{$\d{l}$} is the embedding dimension of the $l$-th layer.
Then, \smf{$\H{L}$} is fed into a loss function of a specific task so that they are jointly trained in an end-to-end framework.
Given \smf{$\H{l-1}$}, the $l$-th SGD layer aims to learn \smf{$\H{l}$} based on feature transformations and signed random walk diffusion \smf{$\mathcal{F}_{d}(\cdot)$} as shown in Figure~\ref{fig:overview:single}.
The layer also uses the skip connection to prevent the vanishing gradient problem when the depth of \method increases.

Figure~\ref{fig:overview:srwdiff} illustrates the intuition behind the signed random walk diffusion.
Each node has two features corresponding to positive and negative surfers, respectively.
The surfer flips its sign when moving along negative edges, while the sign is kept along positive edges.
For example, the positive (or negative) surfer becomes positive at node $v$ if it moves from
a positively connected node $u$ (or a negatively connected node $t$).
As a result, the aggregated features at node $v$ become similar to those connected by positive edges (e.g., node $u$), and different from those connected by negative edges (e.g., node $t$).
In other words, it satisfies homophily and heterophily at the same time while unsigned GCNs cannot handle the heterophily of negative edges.
Furthermore, we inject the local feature (i.e., the input feature of the module) of node $v$ at each aggregation so that the resulting features remain distinguishable during the diffusion.



\subsection{Signed Graph Diffusion Layer}
\label{sec:method:sgdnet}

Given a signed graph $\G$ and the node embeddings \smf{$\H{l-1}$} from the previous layer, the $l$-th SGD layer learns new embeddings \smf{$\H{l}$} as shown in Figure~\ref{fig:overview:single}.
It first transforms \smf{$\H{l-1}$} into hidden features \smf{$\tH{l}$} as \smf{$\tH{l} = \H{l-1}\W{l}_{t}$} with a learnable parameter \smf{$\W{l}_{t} \in \mathbb{R}^{\d{l-1} \times \d{l}}$}.
Then, it applies the signed random walk diffusion which is represented as the function \smf{$\mathcal{F}_{d}(\G, \tH{l})$} that returns \smf{$\dP{l} \in \mathbb{R}^{n \times \d{l}}$} and \smf{$\dM{l} \in \mathbb{R}^{n \times \d{l}}$} as the positive and the negative embeddings, respectively (details in Section~\ref{sec:method:sgdnet:srwdiff}).
The embeddings are concatenated and transformed as follows:
\begin{align}
	\label{eq:sgdnet:non_linear_trans}
	\H{l} &= \phi\left(\left[\dP{l} \vert\rvert \dM{l}\right]\W{l}_{n} + \H{l-1}\right)
\end{align}
\noindent where $\phi(\cdot)$ is a non-linear activator such as $\texttt{tanh}$,
\smf{$\vert\rvert$} denotes horizontal concatenation of two matrices, and
\smf{$\W{l}_{n} \in \mathbb{R}^{2\d{l} \times \d{l}}$} is a trainable weight matrix that learns a relationship between \smf{$\dP{l}$} and \smf{$\dM{l}$}. 
We use the skip connection~\citep{he2016deep,li2019deepgcns} with \smf{$\H{l-1}$} in Equation~\eqref{eq:sgdnet:non_linear_trans} to avoid the vanishing gradient issue which frequently occurs when multiple layers are stacked.

\begin{figure*}[t]
    \centering
    \vspace{-5mm}
    \hspace{2mm}
    \subfigure[Signed random walks]{
    	\hspace{2mm}
    	\label{fig:example:srw}
        \includegraphics[width=0.26\linewidth]{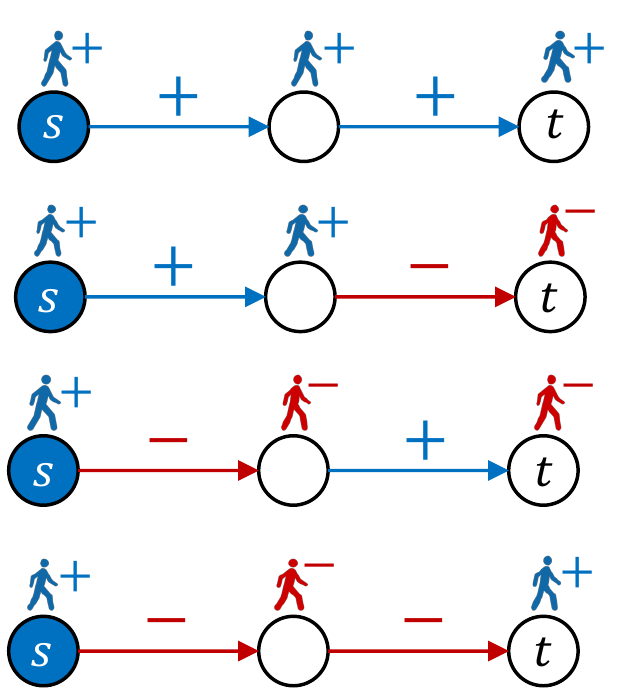}
    }
    \hspace{5mm}
    \subfigure[Feature diffusion for $\pvk{v}{k}$ and $\mvk{v}{k}$]{
    	\hspace{6mm}
    	\label{fig:example:eq:sgdnet}
        \includegraphics[width=0.5\linewidth]{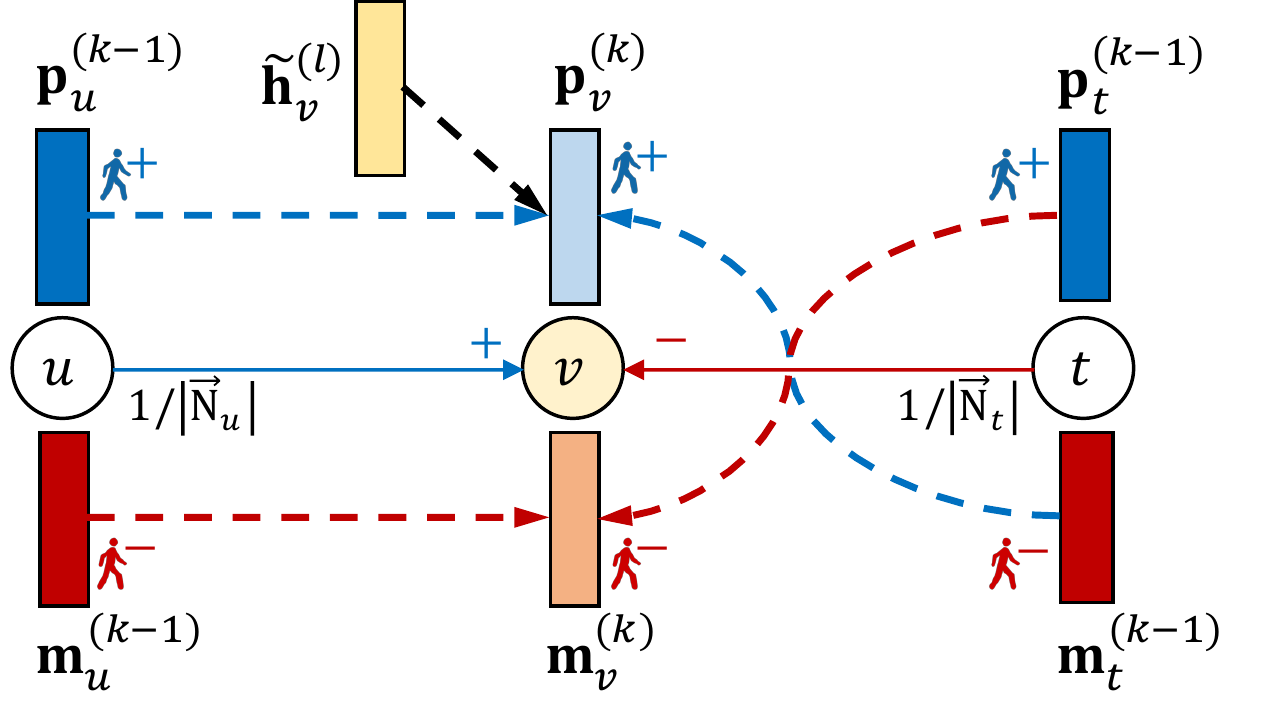}
    }
    \caption{
		Feature diffusion by signed random walks in \method.
        (a) Signed random walks properly consider edge signs.
        (b) The positive and the negative feature vectors \smf{$\pvk{v}{k}$} and \smf{$\mvk{v}{k}$} are updated from the previous feature vectors and the local feature vector \smf{$\thl{v}{l}$} as described in Equation~\eqref{eq:srwdiff}.
    }
\end{figure*}

\subsection{Signed Random Walk Diffusion}
\label{sec:method:sgdnet:srwdiff}

We design the signed random walk diffusion operator $\mathcal{F}_{d}(\cdot)$ used in the $l$-th SGD layer.
Given the signed graph $\G$ and the hidden node embeddings $\tH{l}$, the diffusion operator $\mathcal{F}_{d}(\cdot)$ diffuses the node features based on random walks considering edge signs so that it properly aggregates node features on signed edges and prevents the aggregated features from being over-smoothed.

Signed random walks are performed by a signed random surfer~\citep{jung2016personalized} who has the $+$ or $-$ sign when moving around the graph.
Figure~\ref{fig:example:srw} shows signed random walks on four cases according to edge signs: 1) a friend's friend, 2) a friend's enemy, 3) an enemy's friend, and 4) an enemy's enemy.
The surfer starts from node $s$ with the $+$ sign.
If it encounters a negative edge, the surfer flips its sign from $+$ to $-$, or vice versa.
Otherwise, the sign is kept.
The surfer determines whether a target node $t$ is a friend of node $s$ or not according to its sign.

$\mathcal{F}_{d}(\cdot)$ exploits the signed random walk for diffusing node features on signed edges.
Each node is represented by two feature vectors which represent the positive and negative signs, respectively.
Let $k$ denote the number of diffusion steps or random walk steps.
Then, \smf{$\pvk{v}{k} \in \mathbb{R}^{\d{l} \times 1}$} and \smf{$\mvk{v}{k} \in \mathbb{R}^{\d{l} \times 1}$} are aggregated at node $v$, respectively, where \smf{$\pvk{v}{k}$} (or \smf{$\mvk{v}{k}$}) is the feature vector visited by the positive (or negative) surfer at step $k$.
These are recursively obtained by the following equations:
\begin{align}
	\label{eq:srwdiff}
	\begin{split}
	\pvk{v}{k} &= (1-c) \Big(\sum_{u \in \INp{v}} \frac{1}{|\ON{u}|}\pvk{u}{k-1} + \sum_{t \in \INm{v}} \frac{1}{|\ON{t}|}\mvk{t}{k-1}\Big) + c\thvl{v}{l} \\
	\mvk{v}{k} &= (1-c) \Big(\sum_{t \in \INm{v}} \frac{1}{|\ON{t}|}\pvk{t}{k-1} + \sum_{u \in \INp{v}} \frac{1}{|\ON{u}|}\mvk{u}{k-1}\Big)
	\end{split}
\end{align}
\noindent where
$\overleftarrow{\set{N}}_{v}^{s}$ is the set of incoming neighbors to node $v$ connected with edges of sign $s$,
\smf{$\ON{u}$} is the set of outgoing neighbors from node $u$ regardless of edge signs,
\smf{$\thl{v}{l}$} is the local feature of node $v$ (i.e., the $v$-th row vector of \smf{$\tH{l}$}),
and $0 < c < 1$ is a local feature injection ratio.
That is, the features are computed by the signed random walk feature diffusion with weight $1-c$ and the local feature injection with weight $c$ with the following details.

\paragraph{Signed Random Walk Feature Diffusion.}

Figure~\ref{fig:example:eq:sgdnet} illustrates how \smf{$\pvk{v}{k}$} and \smf{$\mvk{v}{k}$} are diffused by the signed random walks according to Equation~\eqref{eq:srwdiff}.
Suppose the positive surfer visits node $v$ at step $k$.
For this to happen, the positive surfer of an incoming neighbor $u$ at step $k-1$ should choose the edge $(u \rightarrow v, +)$ by a probability \smf{$1/|\ON{u}|$}.
This transition to node $v$ along the positive edge allows to keep the surfer's positive sign.
At the same time, the negative surfer of an incoming neighbor $t$ at step $k-1$ should move along the edge $(t \rightarrow v, -)$ by a probability \smf{$1/|\ON{t}|$}.
In this case, the surfer flips its sign from $-$ to $+$.  
Considering these signed random walks, \smf{$\pvk{v}{k}$} is obtained by the weighted aggregation of \smf{$\pvk{u}{k-1}$} and \smf{$\mvk{t}{k-1}$}.
Similarly, \smf{$\mvk{v}{k}$} is aggregated as shown in Figure~\ref{fig:example:eq:sgdnet}.

\paragraph{Local Feature Injection.}

Although the feature diffusion above properly considers edge signs,
the generated features could be over-smoothed after many steps if we depend solely on the diffusion.
In other words, it considers only the graph information explored by the signed random surfer, while the local information in the hidden feature \smf{$\thl{v}{l}$} is disregarded during the diffusion.
Hence, as shown in Figure~\ref{fig:example:eq:sgdnet}, we explicitly inject the local feature \smf{$\thl{v}{l}$} to \smf{$\pvk{v}{k}$} with weight $c$ at each aggregation in Equation~\eqref{eq:srwdiff} so that the diffused features are not over-smoothed.
The reason why local features are only injected to $+$ embeddings is that we consider a node should trust ($+$) its own information (i.e., its local feature).

\orange{
}

{


\subsection{Convergence Guarantee of Signed Random Walk Diffusion}

Suppose that \smf{$\P{k}=[\pvkt{1}{k}; \cdots; \pvkt{n}{k}]$} and \smf{$\M{k}=[\mvkt{1}{k}; \cdots; \mvkt{n}{k}]$} represent the positive and negative embeddings of all nodes, respectively, where $;$ denotes vertical concatenation.
Let \smf{$\mat{A}_{s}$} be the adjacency matrix for sign $s$ such that \smf{$\mat{A}_{suv}$} is $1$ for signed edge $(u \rightarrow v, s)$, and $0$ otherwise.
Then, Equation~\eqref{eq:srwdiff} is vectorized as follows:
\begin{align}
\label{eq:srwdiff:vectorized}
\begin{rcases*}
\text{  }\P{k} = (1-c) (\nApT\P{k-1} + \nAnT\M{k-1}) + c\tH{l} \\
\M{k} = (1-c) (\nAnT\P{k-1} + \nApT\M{k-1})
\end{rcases*}
\Rightarrow
\T{k} &= (1-c) \nB\T{k-1} + c\Q
\end{align}
\noindent where \smf{$\mat{\tilde{A}}_{s} = \mat{D}^{-1}\mat{A}_{s}$} is the normalized matrix for sign $s$, and \smf{$\mat{D}$} is a diagonal out-degree matrix (i.e., \smf{$\mat{D}_{ii} = |\ON{i}|$}).
The left equation of Equation \eqref{eq:srwdiff:vectorized} is compactly represented as the right equation where
\begin{equation*}
	\T{k} = \begin{bmatrix}
		\P{k} \\
		\M{k}
	\end{bmatrix}
	\qquad
	\nB = \begin{bmatrix}
		\nApT & \nAnT \\
		\nAnT & \nApT
	\end{bmatrix}
	\qquad
	\Q = \begin{bmatrix}
		\tH{l} \\
		\mat{0}
	\end{bmatrix}.
\end{equation*}
Then, $\T{k}$ is guaranteed to converge as shown in the following theorem.
\begin{theorem}
\label{theorem:convergence}
The diffused features in $\T{k}$ converge to equilibrium for $c \in (0, 1)$ as follows:
\begin{align}
	\mat{T}^{*}=\lim_{k \rightarrow \infty}\T{k}
	&= \lim_{k \rightarrow \infty}\left(
		\sum_{i = 0}^{k-1} (1-c)^{i}\nB^{i}
	\right)\tQ = (\mat{I} - (1-c)\nB)^{-1}\tQ \;\;\;\;\;\;\;\;\;\;  (\tQ \coloneqq c\Q)  \label{eq:srwdiff:exact}
\end{align}
If we iterate Equation~\eqref{eq:srwdiff:vectorized} $K$ times for $1 \leq k \leq K$, the exact solution $\mat{T}^{*}$ is approximated as
\begin{align}
	\mat{T}^{*} \approx \T{K} = \tQ + (1-c)\nB\tQ +   \cdots + (1-c)^{K-1}\nB^{K-1}\tQ
	+ (1-c)^{K}\nB^{K}\T{0}
	\label{eq:srwdiff:approx}
\end{align}
where $\lVert \mat{T}^{*} - \T{K} \rVert_{1} \leq (1-c)^{K}\lVert \mat{T}^{*} - \T{0} \rVert_{1}$, and $\T{0} = \begin{bmatrix}
		\P{0} \\
		\M{0}
	\end{bmatrix}$ is the initial value of Equation~\eqref{eq:srwdiff:vectorized}. \QEDB
\end{theorem}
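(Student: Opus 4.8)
The plan is to read \eqref{eq:srwdiff:vectorized} as an affine fixed-point iteration $\T{k} = (1-c)\nB\T{k-1} + \tQ$ and analyze it with the induced matrix $1$-norm (maximum absolute column sum), exploiting that $\nB$ is sub-stochastic. First I would unroll the recursion by induction on $K$: substituting the right-hand side of \eqref{eq:srwdiff:vectorized} into itself repeatedly gives $\T{K} = \sum_{i=0}^{K-1}(1-c)^{i}\nB^{i}\tQ + (1-c)^{K}\nB^{K}\T{0}$, which is exactly \eqref{eq:srwdiff:approx}. This part is purely mechanical.

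The substantive step is to show $\lVert\nB\rVert_{1} \le 1$. Since $\nAp = \mat{D}^{-1}\mat{A}_{+}$, $\nAn = \mat{D}^{-1}\mat{A}_{-}$ and $\mat{D}_{ii} = |\ON{i}|$ is the total out-degree, the matrix $\nAp + \nAn = \mat{D}^{-1}(\mat{A}_{+}+\mat{A}_{-})$ is row-stochastic: row $i$ of $\mat{A}_{+}+\mat{A}_{-}$ sums to $|\ON{i}|$ (dangling nodes, if present, only make some rows sub-stochastic). Transposing and using the block form $\nB = \bigl[\begin{smallmatrix}\nApT & \nAnT\\ \nAnT & \nApT\end{smallmatrix}\bigr]$, the sum of each column of $\nB$ (all entries nonnegative) equals a row sum of $\nAp+\nAn$, hence is at most $1$. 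Therefore $\lVert(1-c)\nB\rVert_{1} = (1-c)\lVert\nB\rVert_{1} \le 1-c < 1$ for $c\in(0,1)$.

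From $\lVert(1-c)\nB\rVert_{1}<1$ it follows that $\mat{I}-(1-c)\nB$ is invertible with $\sum_{i=0}^{\infty}(1-c)^{i}\nB^{i} = (\mat{I}-(1-c)\nB)^{-1}$ (Neumann series), and $\lVert(1-c)^{K}\nB^{K}\T{0}\rVert_{1} \le (1-c)^{K}\lVert\T{0}\rVert_{1} \to 0$. Letting $K\to\infty$ in \eqref{eq:srwdiff:approx} then yields $\mat{T}^{*} = \bigl(\sum_{i=0}^{\infty}(1-c)^{i}\nB^{i}\bigr)\tQ = (\mat{I}-(1-c)\nB)^{-1}\tQ$, which is \eqref{eq:srwdiff:exact}. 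For the error bound, note $\mat{T}^{*}$ satisfies the fixed-point equation $\mat{T}^{*} = (1-c)\nB\mat{T}^{*} + \tQ$; subtracting \eqref{eq:srwdiff:vectorized} gives $\mat{T}^{*}-\T{k} = (1-c)\nB(\mat{T}^{*}-\T{k-1})$, so by induction $\mat{T}^{*}-\T{K} = ((1-c)\nB)^{K}(\mat{T}^{*}-\T{0})$, and submultiplicativity together with $\lVert\nB\rVert_{1}\le 1$ gives $\lVert\mat{T}^{*}-\T{K}\rVert_{1} \le (1-c)^{K}\lVert\mat{T}^{*}-\T{0}\rVert_{1}$.

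I expect the only genuinely non-routine point to be the column-sum argument for $\lVert\nB\rVert_{1}\le 1$: one must track the block structure of $\nB$ carefully and be explicit about the out-degree normalization (and the dangling-node convention), and confirm that the chosen $1$-norm is submultiplicative for the rectangular factors $\mat{T}^{*}-\T{0}$ and $\tQ$. The remaining steps are standard Neumann-series and fixed-point bookkeeping.
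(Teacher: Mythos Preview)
Your proposal is correct and follows essentially the same route as the paper: unroll the affine recursion, establish $\lVert\nB\rVert_{1}\le 1$ via the block column-sum argument (exactly the content of the paper's Theorem~\ref{lemma:spectral}), deduce convergence of the Neumann series, and obtain the error bound by subtracting the fixed-point equation from the iteration. The only cosmetic difference is that the paper phrases the convergence step via the spectral radius ($\rho((1-c)\nB)\le(1-c)\lVert\nB\rVert_{1}<1$) before switching to the $1$-norm for the error bound, whereas you work with $\lVert\cdot\rVert_{1}$ throughout; your version is slightly more uniform but not materially different.
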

\begin{proof}
	 A proof sketch is to show the spectral radius of \smf{$\nB$}\normalsize is less than or equal to $1$, which guarantees the convergence of
	 the geometric series with \smf{$(1-c)\nB$}\normalsize.
	 See the details in Appendix~\ref{sec:appendix:convergence_analysis}.
\QEDB 
\end{proof}
	
According to Theorem~\ref{theorem:convergence}, \smf{$\nB^{K}\tQ$} is the node features diffused by $K$-step signed random walks with \smf{$\tQ$} where \smf{$\nB^{K}$} is interpreted as the transition matrix of $K$-step signed random walks.
Thus, the approximation is the sum of the diffused features from $1$ to $K$ steps with a decaying factor $1-c$, i.e., the effect of distant nodes gradually decreases while that of neighboring nodes is high.
This is the reason why \method prevents diffused features from being over-smoothed.
Also, the approximation error \smf{$\lVert \mat{T}^{*} - \T{K} \rVert_{1}$} exponentially deceases as $K$ increases due to the term \smf{$(1-c)^{K}$}.
Another point is that the iteration of Equation~\eqref{eq:srwdiff:vectorized} converges to the same solution no matter what \smf{$\P{0}$} and \smf{$\M{0}$} are given.
In this work, we initialize \smf{$\P{0}$} with \smf{$\tH{l}$}, and randomly initialize \smf{$\M{0}$} in $[-1, 1]$.

The signed random walk diffusion operator $\mathcal{F}_{d}(\cdot)$ iterates Equation~\eqref{eq:srwdiff:vectorized} $K$ times for $1 \leq k \leq K$ where $K$ is the number of diffusion steps, and it returns \smf{$\dP{l}\leftarrow \P{K}$} and \smf{$\dM{l}\leftarrow \M{K}$} as the outputs of the diffusion module at the $l$-th SGD layer.
The detailed pseudocode of \method is described in Appendix~\ref{sec:appendix:pseudocode}, and its time complexity is analyzed in Appendix~\ref{sec:appendix:time_complexity}. 
}

\subsection{Loss Function for Link Sign Prediction}
\label{sec:method:loss}

The link sign prediction is to predict the missing sign of a given edge. 
As shown in Figure~\ref{fig:overview:multiple}, \method produces the final node embeddings \smf{$\H{L}$}.
The embeddings are fed into a loss function
\smf{$\mathcal{L}(\G, \H{L}; \mat{\Theta})  = \mathcal{L}_{sign}(\G, \H{L}) + \lambda\mathcal{L}_{reg}(\mat{\Theta})$}
where $\mat{\Theta}$ is the set of model parameters, $\mathcal{L}_{sign}(\cdot)$ is the binary cross entropy loss, and $\mathcal{L}_{reg}(\cdot)$ is the $L_2$ regularization loss with weight decay $\lambda$.
For a signed edge $(u\rightarrow v,s)$, the edge feature is \smf{$\vect{z}_{uv} \in \mathbb{R}^{1 \times 2\d{L}} \!\!= \vect{h}_{u}^{(L)} \vert\rvert \vect{h}_{v}^{(L)}$} where \smf{$\vect{h}_{u}^{(L)}$} is the $u$-th row vector of \smf{$\H{L}$}.
Let $\set{E}$ be the set of signed edges.
Then, $\mathcal{L}_{sign}(\cdot)$ is represented as follows:
\begin{align*}
	\mathcal{L}_{sign}(\G, \X) = -\!\!\sum_{(u \rightarrow v, s) \in \set{E}} \; \sum_{t \in \{+,-\}} \;\;
	\mathbb{I}(t=s)\log\left(\texttt{softmax}_{t}\left(\vect{z}_{uv} \mat{W} \right)\right)
\end{align*}
\noindent where
$\mat{W} \in \mathbb{R}^{2\d{L} \times 2}$ is a learnable weight matrix,
$\texttt{softmax}_{t}(\cdot)$ is the probability for sign $t$ after softmax operation, and
$\mathbb{I}(\cdot)$ returns $1$ if a given predicate is true, and $0$ otherwise.

\section{Experiments}
\label{sec:experiments}
We evaluate the effectiveness of \method through the link sign prediction task. 



\setlength{\tabcolsep}{12.5pt}
\begin{table}[!t]
	\small
	\begin{threeparttable}[t]
		\centering
		\caption{
			Dataset statistics of signed graphs.
			$\szV$ and $\szE$ are the number of nodes and edges, respectively.
			Given sign $s \in \{+,-\}$, $|\set{E}^{s}|$ and $\rho(s)$ are the number and percentage of edges with sign $s$, respectively.
			\vspace{-6mm}
		}
		\begin{tabular}{l|rrrrrr}
			\hline
			\toprule
            \textbf{Dataset} & $\szV$ & $\szE$ & $\szEp$ & $\szEn$ & $\rho(+)$ & $\rho(-)$ \\
			\midrule
            Bitcoin-Alpha\tnote{1} &	3,783 & 24,186  & 22,650 & 1,536 & 93.65\% & 6.35\% \\
            Bitcoin-OTC\tnote{1}   &	5,881 & 35,592  & 32,029 & 3,563 &  89.99\% & 10.01\% \\
            Slashdot\tnote{2}      &	79,120 &	515,397 & 392,326 & 123,071 & 76.12\% & 23.88\% \\
            Epinions\tnote{3}      &	131,828 &	841,372 & 717,667 & 123,705 & 85.30\% & 14.70\%  \\
			\bottomrule
		\end{tabular}
		\label{tab:datasets}
        \begin{tablenotes}
        	\scriptsize{
            \item[1] {\url{https://snap.stanford.edu/data/soc-sign-bitcoin-otc.html}}
			\item[2] {\url{http://konect.uni-koblenz.de/networks/slashdot-zoo}}
			\item[3] {\url{http://www.trustlet.org/wiki/Extended\_Epinions\_dataset}}
			}
		\end{tablenotes}
	\end{threeparttable}
\end{table}

\textbf{Datasets.}
We perform experiments on four standard signed graphs summarized in Table~\ref{tab:datasets}: Bitcoin-Alpha~\citep{kumar2016edge}, Bitcoin-OTC~\citep{kumar2016edge}, Slashdot~\citep{kunegis2009slashdot}, and Epinions~\citep{guha2004propagation}.
We provide the detailed description of each dataset in Appendix~\ref{sec:appendix:datasets}.
We also report additional experiments on Wikipedia dataset~\citep{leskovec2010signed} in Appendix~\ref{sec:appendix:wikipedia}.

\textbf{Competitors.}
We compare our proposed \method with the following competitors:
\begin{itemize}
	\item {
		\textbf{APPNP}~\citep{DBLP:conf/iclr/KlicperaBG19}:
		an unsigned GCN model based on Personalized PageRank. 
	}
	\item {
		\textbf{ResGCN}~\citep{li2019deepgcns}:
		another unsigned GCN model exploiting skip connections to deeply stack multiple layers.
	}
	\item {
		\textbf{SIDE}~\citep{KimPLK18}:
		a network embedding model optimizing the likelihood over signed edges using random walk sequences to encode structural information into node embeddings.
	}
	\item {
		\textbf{SLF}~\citep{xu2019link}:
		another network embedding model considering positive, negative, and non-linked relationships to learn non-negative node embeddings.
	}
	\item {
		\textbf{SGCN}~\citep{derr2018signed}:
		a state-of-the-art signed GCN model considering balanced and unbalanced paths motivated from balance theory to propagate embeddings. 
	}
	\item {
		\textbf{SNEA}~\citep{li2020learning}:
		another signed GCN model extending SGCN by learning attentions on the balanced and unbalanced paths.
	}
\end{itemize}
We use the absolute adjacency matrix for APPNP and ResGCN since they handle only unsigned edges.
All methods are implemented by PyTorch and Numpy in Python.
We use a machine with Intel E5-2630 v4 2.2GHz CPU and Geforce GTX 1080 Ti for the experiments.

\textbf{Evaluation Metrics.}
We randomly split the edges of a signed graph into training and test sets by the 8:2 ratio.
As shown in Table~\ref{tab:datasets}, the sign ratio is highly skewed to the positive sign, i.e., the sampled datasets are naturally imbalanced.
Considering the class imbalance, we measure the area under the curve (AUC) to evaluate predictive performance.
We also report F1-macro measuring the average of the ratios of correct predictions for each sign since negative edges need to be treated as important as positive edges (i.e., it gives equal importance to each class).
A higher value of AUC or F1-macro indicates better performance.
We repeat each experiment $10$ times with different random seeds and report the average and standard deviation of test values.

\textbf{Hyperparameter Settings.}
We set the dimension of final node embeddings to $32$ for all methods so that their embeddings have the same learning capacity (see its effect in Appendix~\ref{sec:appendix:effect_dim}).
We perform $5$-fold cross-validation for each method to find the best hyperparameters and measure the test accuracy with the selected ones.
In the cross-validation for \method,  the number $L$ of SGD layers is sought between $1$ and $6$, and the restart probability $c$ is selected from $0.05$ to $0.95$ by step size $0.1$.
We set the number $K$ of diffusion steps to $10$ and the feature dimension $\d{l}$ of each layer to $32$.
We follow the range of each hyperparameter recommended in its corresponding paper for the cross-validation of other models.
Our model is trained by the Adam optimizer \citep{DBLP:journals/corr/KingmaB14}, where the learning rate is $0.01$, the weight decay $\lambda$ is $0.001$, and the number of epochs is $100$.
We summarize the hyperparameters used by \method for each dataset in Appendix~\ref{sec:appendix:hyperparameter}. 

\def\arraystretch{0.8}
\setlength{\tabcolsep}{8.6pt}
\begin{table*}[!t]
\small
\begin{threeparttable}[t]
	\caption{
		\label{tab:link_sign:auc}
\method gives the best link sign prediction performance in terms of AUC.
		The best model is in bold, and the second best model is underlined.
		The \% increase measures the best accuracy against the second best accuracy.
		\vspace{-2mm}
	}
\begin{tabular}{l|cccc}
\hline
\toprule
\multicolumn{1}{c|}{\textbf{AUC}}      & \textbf{Bitcoin-Alpha} & \textbf{Bitcoin-OTC} & \textbf{Slashdot}    & \textbf{Epinions}    \\
\midrule
\textbf{APPNP}~\citep{DBLP:conf/iclr/KlicperaBG19}       & 0.854$\pm$0.010   & 0.867$\pm$0.009 & \underline{0.837$\pm$0.003} & 0.870$\pm$0.002  \\
\textbf{ResGCN}~\citep{li2019deepgcns} & 0.853$\pm$0.017	& \underline{0.876$\pm$0.010} & 0.744$\pm$0.004 &	0.871$\pm$0.002 \\
\midrule
\textbf{SIDE}~\citep{KimPLK18} & 0.801$\pm$0.020 & 0.839$\pm$0.013 & 0.814$\pm$0.003 & 0.880$\pm$0.003 \\
\textbf{SLF}~\citep{xu2019link}         & 0.779$\pm$0.023   & 0.797$\pm$0.014 & 0.833$\pm$0.006 & 0.876$\pm$0.005 \\
\midrule
\textbf{SGCN}~\citep{derr2018signed}        & 0.824$\pm$0.018   & 0.857$\pm$0.008 & 0.827$\pm$0.004 & \underline{0.895$\pm$0.002} \\
\textbf{SNEA}~\citep{li2020learning} & \underline{0.855$\pm$0.006} & 0.858$\pm$0.008 & 0.754$\pm$0.005 & 0.771$\pm$0.004 \\
\midrule
\textbf{SGDNet (proposed)}      & \bf 0.911$\pm$0.007   & \bf 0.921$\pm$0.005 & \bf 0.886$\pm$0.001 & \bf 0.932$\pm$0.001 \\
\midrule
\% increase & 6.4\%        & 4.9\%      & 5.9\%      & 3.9\%      \\
\bottomrule
\end{tabular}
\end{threeparttable}
\vspace{-2mm}
\end{table*}


\def\arraystretch{0.8}
\setlength{\tabcolsep}{8.6pt}
\begin{table*}[!t]
\small
\begin{threeparttable}[t]
	\caption{
		\label{tab:link_sign:f1macro}
\method gives the best link sign prediction performance in terms of F1-macro.
		\vspace{-2mm}
	}
\begin{tabular}{l|cccc}
\hline
\toprule
\multicolumn{1}{c|}{\textbf{F1-macro}}      & \textbf{Bitcoin-Alpha} & \textbf{Bitcoin-OTC} & \textbf{Slashdot}    & \textbf{Epinions}    \\
\midrule
\textbf{APPNP}~\citep{DBLP:conf/iclr/KlicperaBG19}       & 0.682$\pm$0.005   & 0.762$\pm$0.009 & 0.748$\pm$0.003 & 0.773$\pm$0.004 \\
\textbf{ResGCN}~\citep{li2019deepgcns} & 0.658$\pm$0.006	& 0.735$\pm$0.015 &	0.609$\pm$0.004 &	0.784$\pm$0.003 \\
\midrule
\textbf{SIDE}~\citep{KimPLK18} & 0.663$\pm$0.008 & 0.709$\pm$0.008 & 0.685$\pm$0.009 & 0.785$\pm$0.006 \\
\textbf{SLF}~\citep{xu2019link}         & 0.615$\pm$0.027   & 0.641$\pm$0.025 & 0.733$\pm$0.008 & 0.810$\pm$0.008  \\
\midrule
\textbf{SGCN}~\citep{derr2018signed}        & \underline{0.690$\pm$0.014}   & \underline{0.776$\pm$0.008} & \underline{0.752$\pm$0.013} & \underline{0.844$\pm$0.002} \\
\textbf{SNEA}~\citep{li2020learning} & 0.670$\pm$0.005 & 0.742$\pm$0.011 & 0.690$\pm$0.005 & 0.805$\pm$0.005 \\
\midrule
\textbf{SGDNet (proposed)}      & \bf 0.757$\pm$0.012   & \bf 0.799$\pm$0.007 & \bf 0.778$\pm$0.002 & \bf 0.854$\pm$0.002 \\
\midrule
\% increase       & 7.4\%        & 1.6\%      & 3.5\%      & 1.2\%       \\
\bottomrule
\end{tabular}
\end{threeparttable}
\end{table*}

\subsection{Link Sign Prediction}
\label{sec:experiments:link_sign}

We evaluate the performance of each method on link sign prediction.
Tables~\ref{tab:link_sign:auc}~and~\ref{tab:link_sign:f1macro} summarize the experimental results in terms of AUC and F1-macro, respectively.
Note that our \method shows the best performance in terms of AUC and F1-macro scores.
\method presents $3.9\sim6.4$\% and $1.2\sim 7.4$\% improvements over the second best models in terms of AUC and F1-macro, respectively.
We have the following observations.
\begin{itemize*}
\item The unsigned GCN models APPNP and ResGCN show worse performance than \method, which shows the importance of using sign information.
\item {
	The performance of network embedding techniques such as SIDE and SLF is worse than that of other GCN-based models; this shows the importance of jointly learning feature extraction and link sign prediction for the performance.
}
\item {
	The performance of SGCN and SNEA which use limited features from nodes within $2 \sim 3$ hops is worse than that of \method which exploits up to \smf{$K$}-hop neighbors' features where $K$ is set to $10$ in these experiments. 
	It indicates that carefully exploiting features from distant nodes as well as neighboring ones is crucial for the performance.
}
\end{itemize*}

\subsection{Effect of Diffusion Steps}
\label{sec:experiments:effect}


\begin{figure}[t]
    \centering
    \subfigure[Bitcoin-Alpha]{
    	\hspace{-2mm}
        \includegraphics[width=0.235\linewidth]{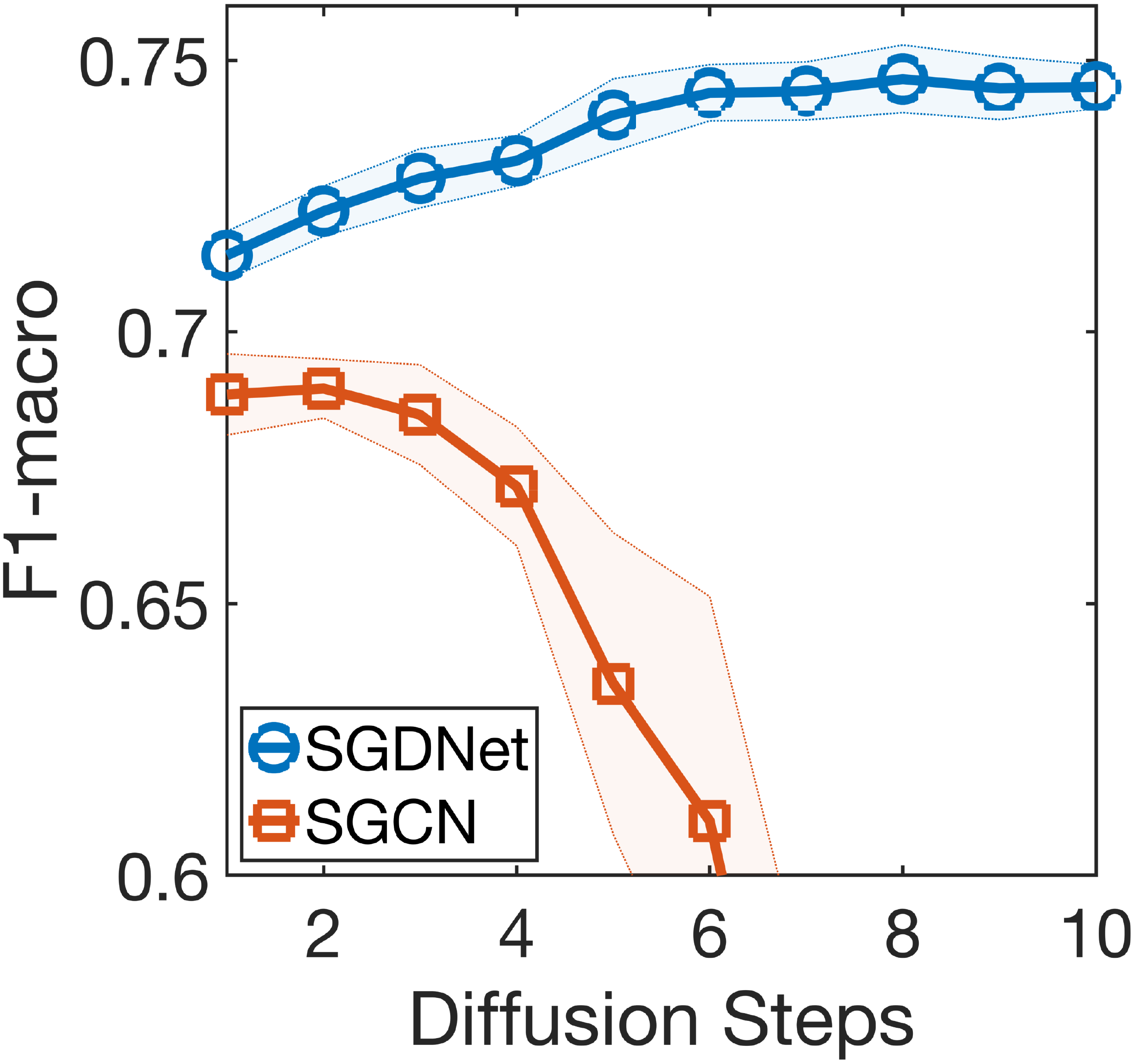}
    }
    \subfigure[Bitcoin-OTC]{
        \includegraphics[width=0.235\linewidth]{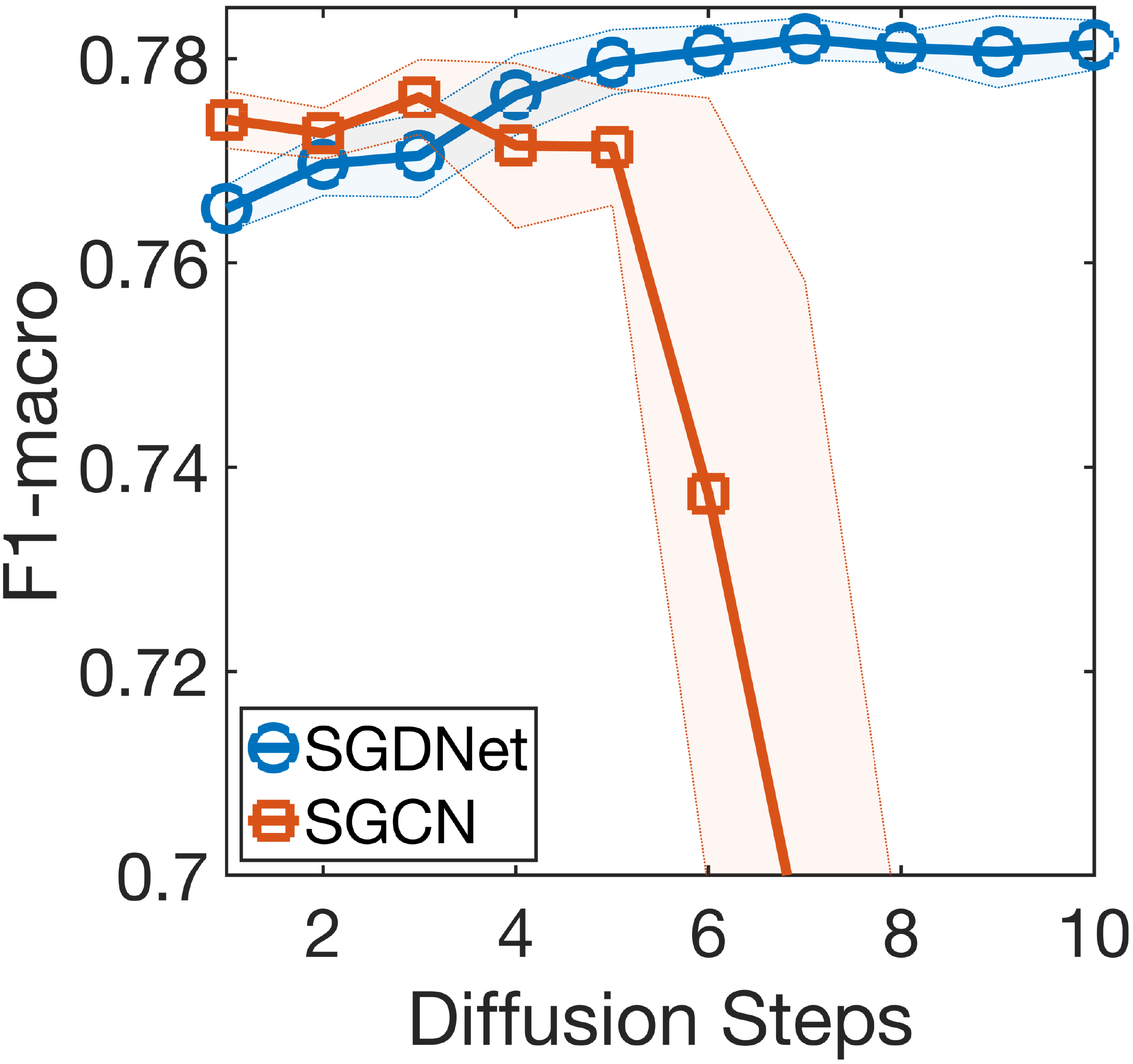}
    }
    \subfigure[Slashdot]{
        \includegraphics[width=0.235\linewidth]{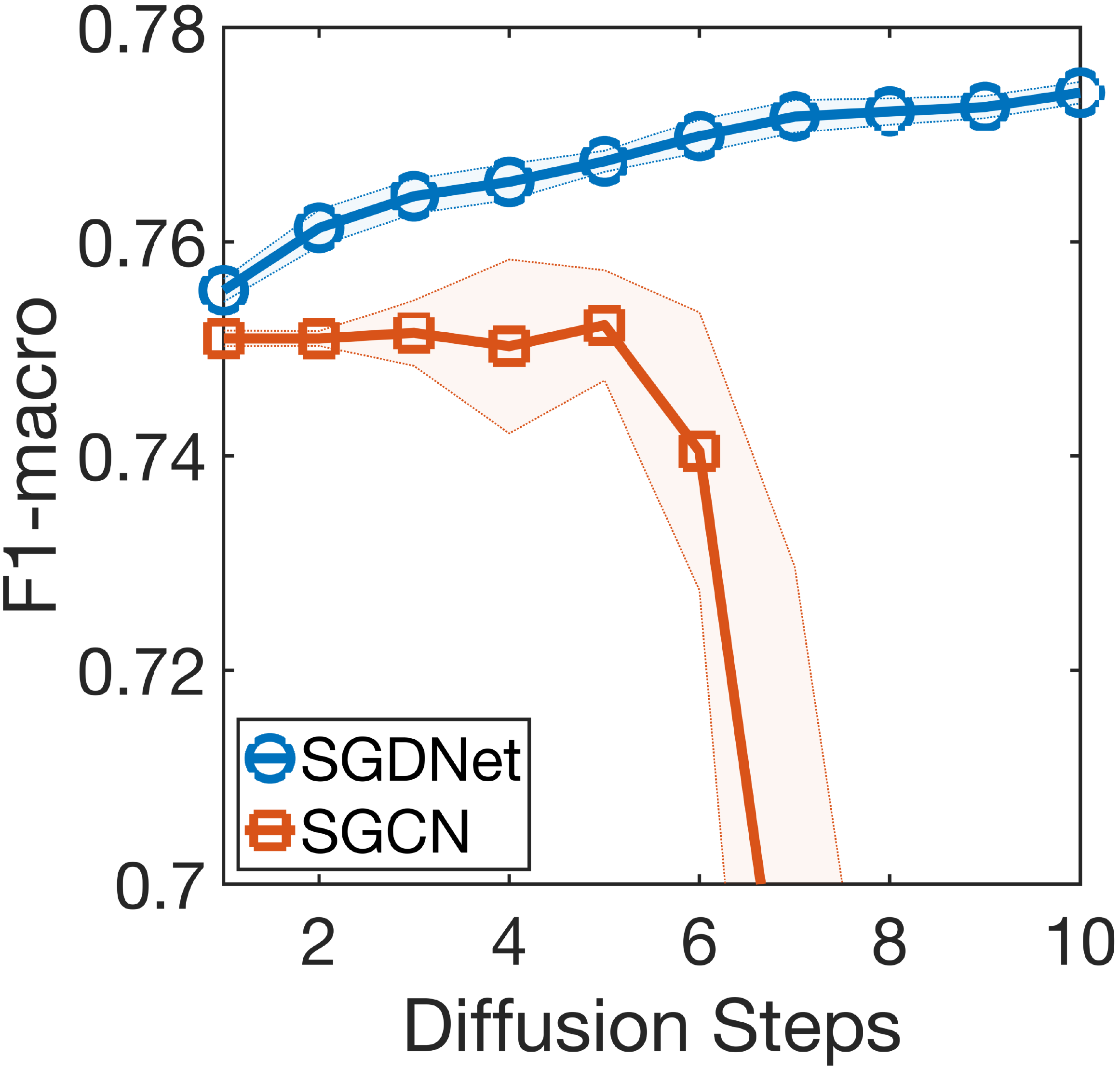}
    }
    \subfigure[Epinions]{
        \includegraphics[width=0.235\linewidth]{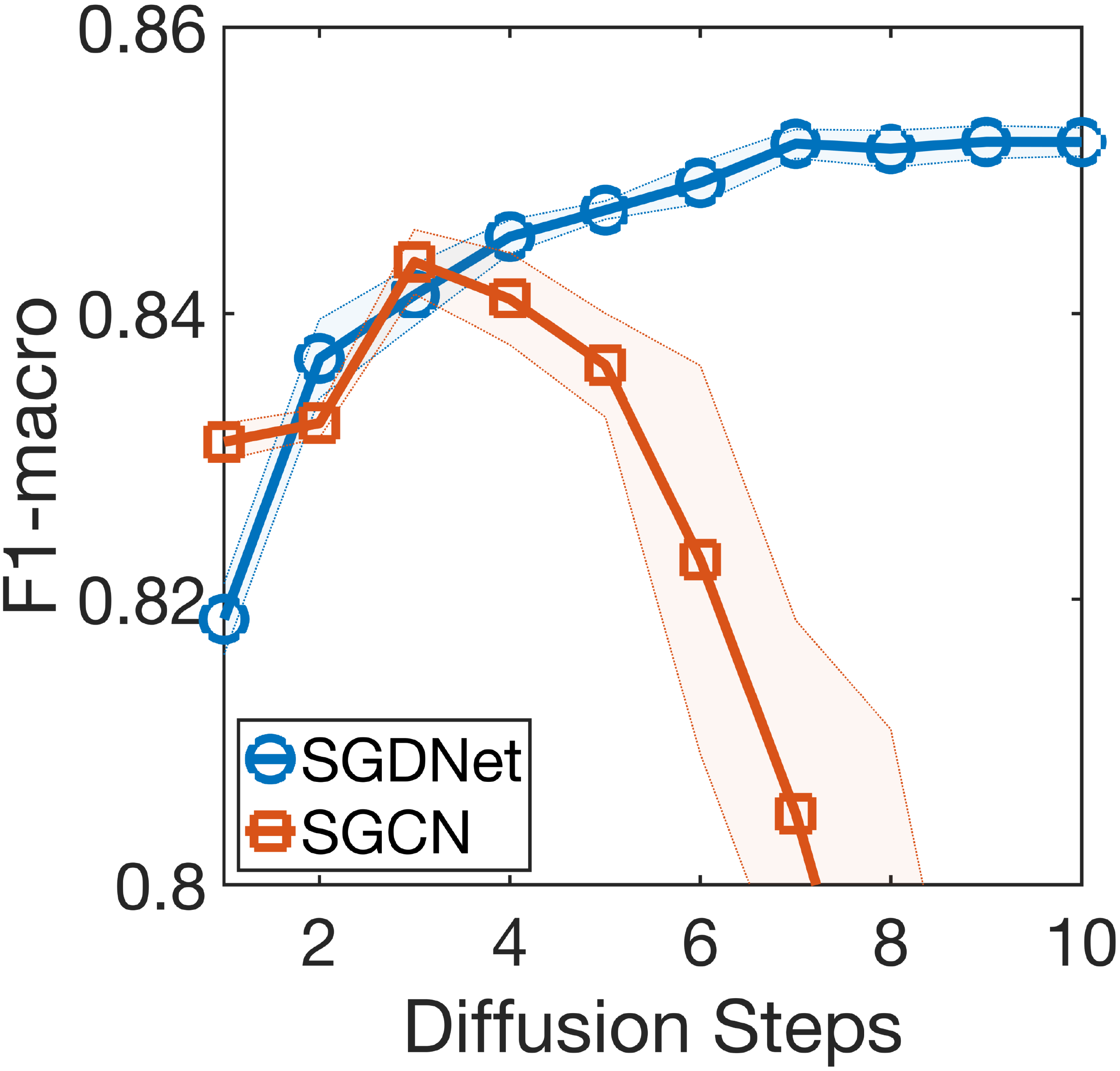}
    }
    \caption{
        \label{fig:effect:diffusion}
        Effect of \method's feature diffusion compared to state-of-the-art SGCN.
        The performance of \method is boosted while that of SGCN degrades as the number $K$ of diffusion steps increases.
    }
    \vspace{-3mm}
\end{figure}

We investigate the effect of the feature diffusion in \method for learning signed graphs.
We use one SGD layer, and set the restart probability $c$ to $0.15$ to evaluate the pure effect of the diffusion module;
we vary the number $K$ of diffusion steps from $1$ to $10$ and evaluate the performance of \method in terms of F1-macro for each diffusion step.
Also, we compare \method to SGCN, a state-of-the-art-model for learning signed graphs.
The number of diffusion steps of SGCN is determined by its number of layers.
Figure~\ref{fig:effect:diffusion} shows that the performance of \method gradually improves as $K$ increases while that of SGCN dramatically decreases over all datasets.
This indicates that SGCN suffers from the performance degradation problem when its network becomes deep, i.e., it is difficult to use more information beyond $3$ hops in SGCN.
On the other hand, \method utilizes features of farther nodes, and generates more expressive and stable features than SGCN does.
Note that the performance of \method converges in general after a sufficient number of diffusion steps, which is highly associated with Theorem~\ref{theorem:convergence}.

\begin{figure*}[t]
    \centering
    \subfigure[Bitcoin-Alpha]{
    	\hspace{-2mm}
        \includegraphics[width=0.231\linewidth]{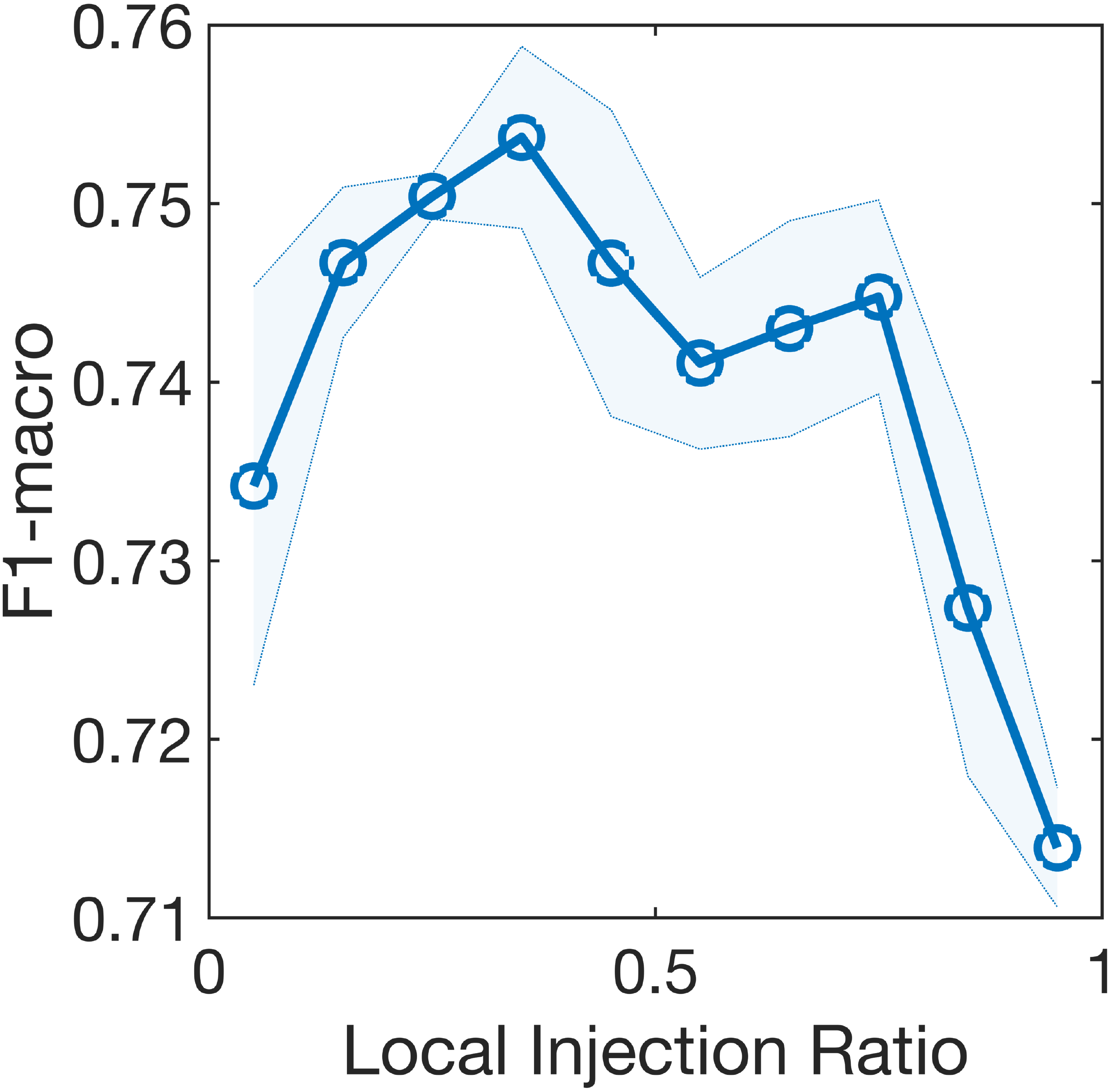}
    }
    \subfigure[Bitcoin-OTC]{
        \includegraphics[width=0.235\linewidth]{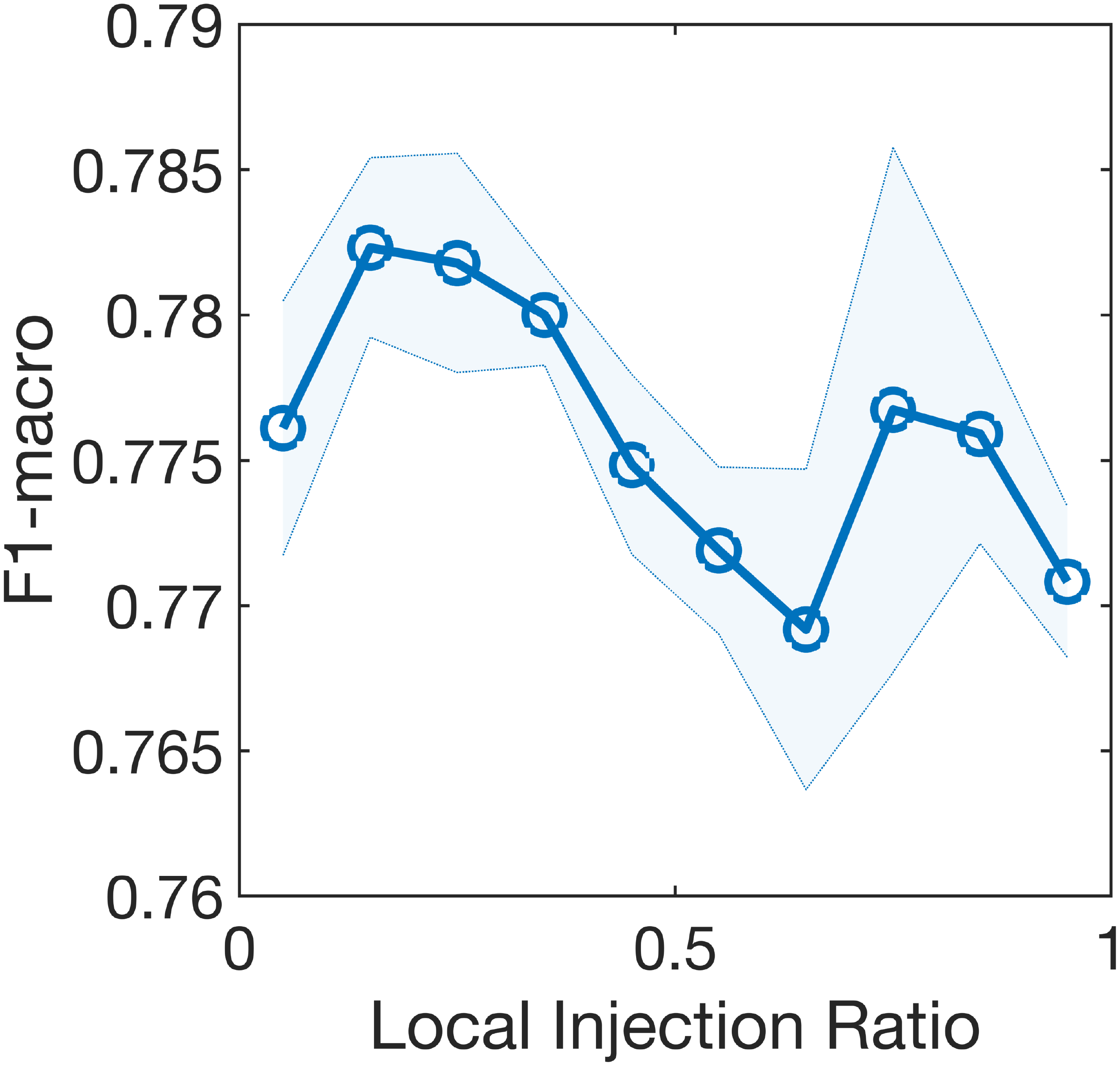}
    }
    \subfigure[Slashdot]{
        \includegraphics[width=0.235\linewidth]{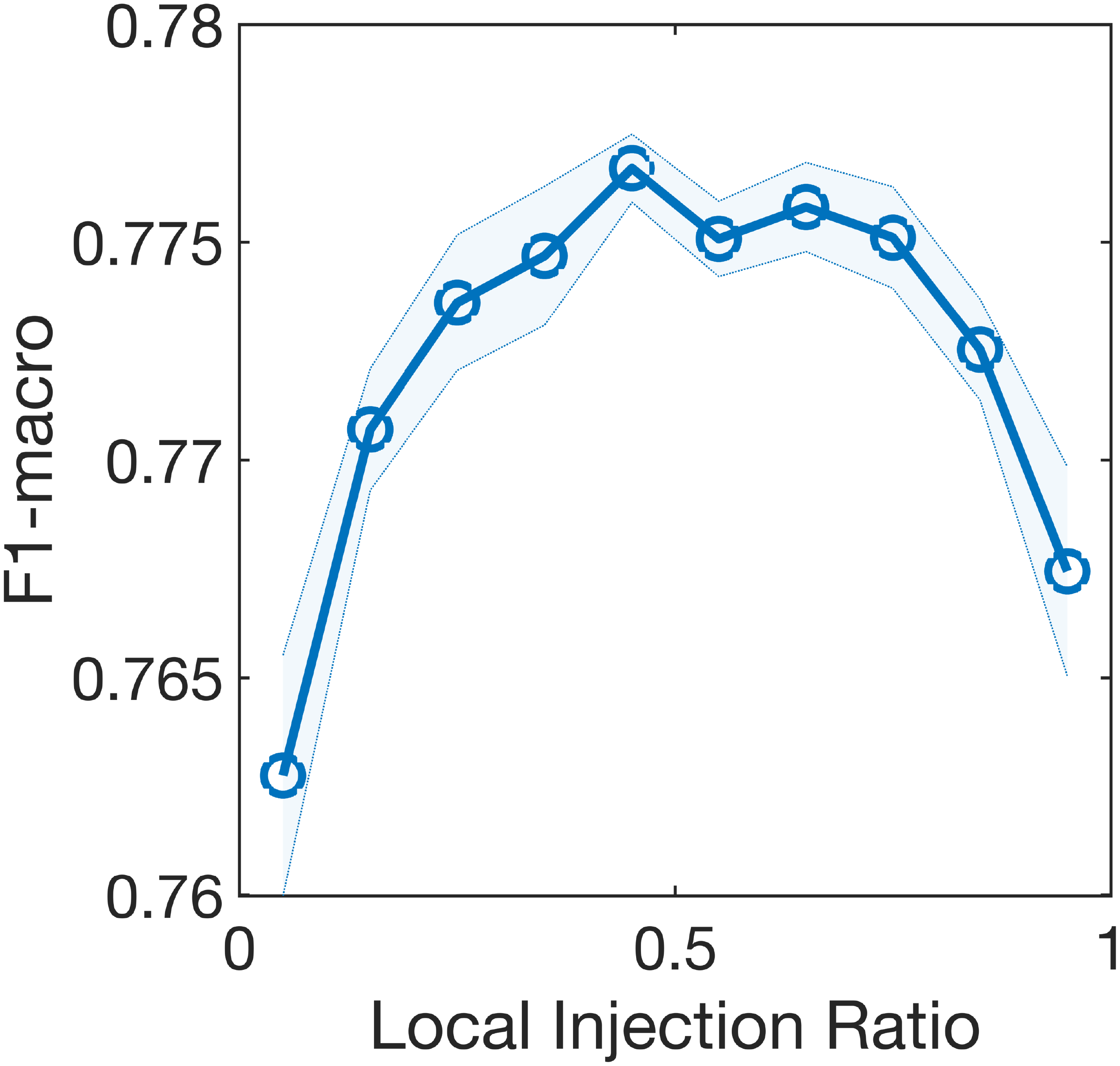}
    }
    \subfigure[Epinions]{
        \includegraphics[width=0.235\linewidth]{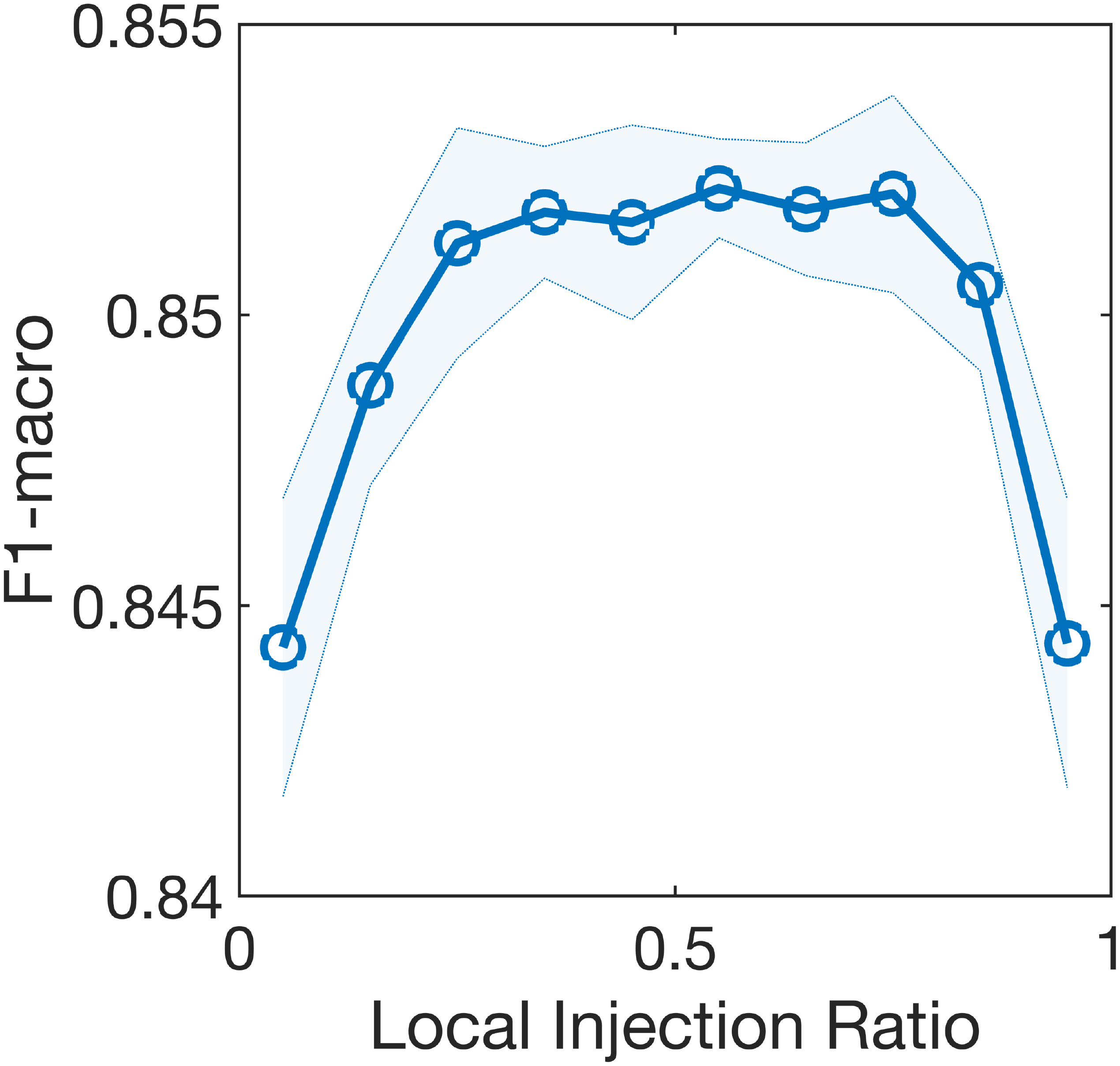}
    }
    \caption{
        \label{fig:effect:restart}
        Effect of local injection ratio $c$ of \method.
        A relatively small value ($0.15 \sim 0.35$) of $c$ is the best for the Bitcoin-Alpha and Bitcoin-OTC (small) datasets while $c$ around $0.5$ shows better accuracy for the Slashdot and Epinions (large) datasets.
    }
\end{figure*}

\vspace{-2mm}
\subsection{Effect of Local Injection Ratio}
\label{sec:experiments:effect:local}

We examine the effect of the local injection ratio $c$ in the diffusion module of \method.
We use one SGD layer, and set the number $K$ of diffusion steps to $10$;
we vary $c$ from $0.05$ to $0.95$ by $0.1$, and measure the performance of the link sign prediction task in terms of F1-macro.
Figure~\ref{fig:effect:restart} shows the effect of $c$ to the predictive performance of \method.
For small datasets such as Bitcoin-Alpha and Bitcoin-OTC, $c$ between $0.15$ and $0.35$ provides better performance.
On the other hand, $c$ around $0.5$ shows higher accuracy for large datasets such as Slashdot and Epinions.
For all datasets, a too low or too high value of $c$ (e.g., $0.05$ or $0.95$) results in a poor performance.

\section{Conclusion}
\label{sec:conclusion}

In this paper, we propose \fullmethod (\method), a novel graph neural network that performs end-to-end node representation learning for link sign prediction in signed graphs.
We propose a signed random walk diffusion method to properly diffuse node features on signed edges, and suggest a local feature injection method to make diffused features distinguishable.
Our diffusion method empowers \method to effectively train node embeddings considering multi-hop neighbors while preserving local information.
Our extensive experiments show that \method provides the best accuracy  outperforming the state-of-the-art models in link sign prediction. 
Future research directions include extending our method for multi-view networks.

\bibliographystyle{iclr2021_conference}
\bibliography{BIB/myref}

\begin{thebibliography}{35}
\providecommand{\natexlab}[1]{#1}
\providecommand{\url}[1]{\texttt{#1}}
\expandafter\ifx\csname urlstyle\endcsname\relax
  \providecommand{\doi}[1]{doi: #1}\else
  \providecommand{\doi}{doi: \begingroup \urlstyle{rm}\Url}\fi

\bibitem[Chu et~al.(2016)Chu, Wang, Pei, Wang, Zhao, and Chen]{chu2016finding}
Lingyang Chu, Zhefeng Wang, Jian Pei, Jiannan Wang, Zijin Zhao, and Enhong
  Chen.
\newblock Finding gangs in war from signed networks.
\newblock In \emph{Proceedings of the 22nd {ACM} {SIGKDD} International
  Conference on Knowledge Discovery and Data Mining, San Francisco, CA, USA,
  August 13-17, 2016}, pp.\  1505--1514. {ACM}, 2016.

\bibitem[Derr et~al.(2018{\natexlab{a}})Derr, Aggarwal, and
  Tang]{derr2018signedmodel}
Tyler Derr, Charu~C. Aggarwal, and Jiliang Tang.
\newblock Signed network modeling based on structural balance theory.
\newblock In \emph{Proceedings of the 27th {ACM} International Conference on
  Information and Knowledge Management, {CIKM} 2018, Torino, Italy, October
  22-26, 2018}, pp.\  557--566. {ACM}, 2018{\natexlab{a}}.

\bibitem[Derr et~al.(2018{\natexlab{b}})Derr, Ma, and Tang]{derr2018signed}
Tyler Derr, Yao Ma, and Jiliang Tang.
\newblock Signed graph convolutional networks.
\newblock In \emph{{IEEE} International Conference on Data Mining, {ICDM} 2018,
  Singapore, November 17-20, 2018}, pp.\  929--934. {IEEE} Computer Society,
  2018{\natexlab{b}}.

\bibitem[Guha et~al.(2004)Guha, Kumar, Raghavan, and
  Tomkins]{guha2004propagation}
Ramanathan~V. Guha, Ravi Kumar, Prabhakar Raghavan, and Andrew Tomkins.
\newblock Propagation of trust and distrust.
\newblock In \emph{Proceedings of the 13th international conference on World
  Wide Web, {WWW} 2004, New York, NY, USA, May 17-20, 2004}, pp.\  403--412.
  {ACM}, 2004.

\bibitem[Halko et~al.(2011)Halko, Martinsson, and Tropp]{halko2011finding}
Nathan Halko, Per{-}Gunnar Martinsson, and Joel~A. Tropp.
\newblock Finding structure with randomness: Probabilistic algorithms for
  constructing approximate matrix decompositions.
\newblock \emph{{SIAM} Rev.}, 53\penalty0 (2):\penalty0 217--288, 2011.

\bibitem[Hamilton et~al.(2017)Hamilton, Ying, and
  Leskovec]{hamilton2017inductive}
William~L. Hamilton, Zhitao Ying, and Jure Leskovec.
\newblock Inductive representation learning on large graphs.
\newblock In \emph{Advances in Neural Information Processing Systems 30: Annual
  Conference on Neural Information Processing Systems 2017, 4-9 December 2017,
  Long Beach, CA, {USA}}, pp.\  1024--1034, 2017.

\bibitem[He et~al.(2016)He, Zhang, Ren, and Sun]{he2016deep}
Kaiming He, Xiangyu Zhang, Shaoqing Ren, and Jian Sun.
\newblock Deep residual learning for image recognition.
\newblock In \emph{2016 {IEEE} Conference on Computer Vision and Pattern
  Recognition, {CVPR} 2016, Las Vegas, NV, USA, June 27-30, 2016}, pp.\
  770--778. {IEEE} Computer Society, 2016.

\bibitem[Holland \& Leinhardt(1971)Holland and
  Leinhardt]{holland1971transitivity}
Paul~W Holland and Samuel Leinhardt.
\newblock Transitivity in structural models of small groups.
\newblock \emph{Comparative group studies}, 2\penalty0 (2):\penalty0 107--124,
  1971.

\bibitem[Jang et~al.(2016)Jang, Faloutsos, Kim, Kang, and
  Ha]{conf/cikm/JangFKKH16}
Min{-}Hee Jang, Christos Faloutsos, Sang{-}Wook Kim, U.~Kang, and Jiwoon Ha.
\newblock {PIN-TRUST:} fast trust propagation exploiting positive, implicit,
  and negative information.
\newblock In \emph{Proceedings of the 25th {ACM} International Conference on
  Information and Knowledge Management, {CIKM} 2016, Indianapolis, IN, USA,
  October 24-28, 2016}, pp.\  629--638, 2016.

\bibitem[Jeh \& Widom(2003)Jeh and Widom]{jeh2003scaling}
Glen Jeh and Jennifer Widom.
\newblock Scaling personalized web search.
\newblock In \emph{Proceedings of the Twelfth International World Wide Web
  Conference, {WWW} 2003, Budapest, Hungary, May 20-24, 2003}, pp.\  271--279.
  {ACM}, 2003.

\bibitem[Jung et~al.(2016)Jung, Jin, Sael, and Kang]{jung2016personalized}
Jinhong Jung, Woojeong Jin, Lee Sael, and U~Kang.
\newblock Personalized ranking in signed networks using signed random walk with
  restart.
\newblock In \emph{{IEEE} 16th International Conference on Data Mining, {ICDM}
  2016, December 12-15, 2016, Barcelona, Spain}, pp.\  973--978. {IEEE}
  Computer Society, 2016.

\bibitem[Jung et~al.(2020{\natexlab{a}})Jung, Jin, and
  Kang]{DBLP:journals/kais/JungJK20}
Jinhong Jung, Woojeong Jin, and U~Kang.
\newblock Random walk-based ranking in signed social networks: model and
  algorithms.
\newblock \emph{Knowl. Inf. Syst.}, 62\penalty0 (2):\penalty0 571--610,
  2020{\natexlab{a}}.

\bibitem[Jung et~al.(2020{\natexlab{b}})Jung, Park, and
  Kang]{DBLP:conf/edbt/JungHU20}
Jinhong Jung, Ha{-}Myung Park, and U~Kang.
\newblock Balansing: Fast and scalable generation of realistic signed networks.
\newblock In \emph{Proceedings of the 23rd International Conference on
  Extending Database Technology, {EDBT} 2020, Copenhagen, Denmark, March 30 -
  April 02, 2020}, pp.\  193--204. OpenProceedings.org, 2020{\natexlab{b}}.

\bibitem[Kim et~al.(2018)Kim, Park, Lee, and Kang]{KimPLK18}
Junghwan Kim, Haekyu Park, Ji{-}Eun Lee, and U~Kang.
\newblock {SIDE:} representation learning in signed directed networks.
\newblock In \emph{Proceedings of the 2018 World Wide Web Conference on World
  Wide Web, {WWW} 2018, Lyon, France, April 23-27, 2018}, pp.\  509--518.
  {ACM}, 2018.

\bibitem[Kingma \& Ba(2015)Kingma and Ba]{DBLP:journals/corr/KingmaB14}
Diederik~P. Kingma and Jimmy Ba.
\newblock Adam: {A} method for stochastic optimization.
\newblock In Yoshua Bengio and Yann LeCun (eds.), \emph{3rd International
  Conference on Learning Representations, {ICLR} 2015, San Diego, CA, USA, May
  7-9, 2015, Conference Track Proceedings}, 2015.

\bibitem[Kipf \& Welling(2017)Kipf and Welling]{DBLP:conf/iclr/KipfW17}
Thomas~N. Kipf and Max Welling.
\newblock Semi-supervised classification with graph convolutional networks.
\newblock In \emph{5th International Conference on Learning Representations,
  {ICLR} 2017, Toulon, France, April 24-26, 2017, Conference Track
  Proceedings}. OpenReview.net, 2017.

\bibitem[Klicpera et~al.(2019)Klicpera, Bojchevski, and
  G{\"{u}}nnemann]{DBLP:conf/iclr/KlicperaBG19}
Johannes Klicpera, Aleksandar Bojchevski, and Stephan G{\"{u}}nnemann.
\newblock Predict then propagate: Graph neural networks meet personalized
  pagerank.
\newblock In \emph{7th International Conference on Learning Representations,
  {ICLR} 2019, New Orleans, LA, USA, May 6-9, 2019}. OpenReview.net, 2019.

\bibitem[Kumar et~al.(2014)Kumar, Spezzano, and
  Subrahmanian]{kumar2014accurately}
Srijan Kumar, Francesca Spezzano, and V.~S. Subrahmanian.
\newblock Accurately detecting trolls in slashdot zoo via decluttering.
\newblock In \emph{2014 {IEEE/ACM} International Conference on Advances in
  Social Networks Analysis and Mining, {ASONAM} 2014, Beijing, China, August
  17-20, 2014}, pp.\  188--195. {IEEE} Computer Society, 2014.

\bibitem[Kumar et~al.(2016)Kumar, Spezzano, Subrahmanian, and
  Faloutsos]{kumar2016edge}
Srijan Kumar, Francesca Spezzano, V.~S. Subrahmanian, and Christos Faloutsos.
\newblock Edge weight prediction in weighted signed networks.
\newblock In \emph{{IEEE} 16th International Conference on Data Mining, {ICDM}
  2016, December 12-15, 2016, Barcelona, Spain}, pp.\  221--230. {IEEE}
  Computer Society, 2016.

\bibitem[Kunegis et~al.(2009)Kunegis, Lommatzsch, and
  Bauckhage]{kunegis2009slashdot}
J{\'{e}}r{\^{o}}me Kunegis, Andreas Lommatzsch, and Christian Bauckhage.
\newblock The slashdot zoo: mining a social network with negative edges.
\newblock In \emph{Proceedings of the 18th International Conference on World
  Wide Web, {WWW} 2009, Madrid, Spain, April 20-24, 2009}, pp.\  741--750.
  {ACM}, 2009.

\bibitem[Leskovec et~al.(2010{\natexlab{a}})Leskovec, Huttenlocher, and
  Kleinberg]{leskovec2010predicting}
Jure Leskovec, Daniel~P. Huttenlocher, and Jon~M. Kleinberg.
\newblock Predicting positive and negative links in online social networks.
\newblock In \emph{Proceedings of the 19th International Conference on World
  Wide Web, {WWW} 2010, Raleigh, North Carolina, USA, April 26-30, 2010}, pp.\
  641--650. {ACM}, 2010{\natexlab{a}}.

\bibitem[Leskovec et~al.(2010{\natexlab{b}})Leskovec, Huttenlocher, and
  Kleinberg]{leskovec2010signed}
Jure Leskovec, Daniel~P. Huttenlocher, and Jon~M. Kleinberg.
\newblock Signed networks in social media.
\newblock In \emph{Proceedings of the 28th International Conference on Human
  Factors in Computing Systems, {CHI} 2010, Atlanta, Georgia, USA, April 10-15,
  2010}, pp.\  1361--1370. {ACM}, 2010{\natexlab{b}}.

\bibitem[Li et~al.(2019{\natexlab{a}})Li, M{\"{u}}ller, Thabet, and
  Ghanem]{li2019deepgcns}
Guohao Li, Matthias M{\"{u}}ller, Ali~K. Thabet, and Bernard Ghanem.
\newblock Deepgcns: Can gcns go as deep as cnns?
\newblock In \emph{2019 {IEEE/CVF} International Conference on Computer Vision,
  {ICCV} 2019, Seoul, Korea (South), October 27 - November 2, 2019}, pp.\
  9266--9275. {IEEE}, 2019{\natexlab{a}}.

\bibitem[Li et~al.(2018)Li, Han, and Wu]{li2018deeper}
Qimai Li, Zhichao Han, and Xiao{-}Ming Wu.
\newblock Deeper insights into graph convolutional networks for semi-supervised
  learning.
\newblock In \emph{Proceedings of the Thirty-Second {AAAI} Conference on
  Artificial Intelligence, (AAAI-18), the 30th innovative Applications of
  Artificial Intelligence (IAAI-18), and the 8th {AAAI} Symposium on
  Educational Advances in Artificial Intelligence (EAAI-18), New Orleans,
  Louisiana, USA, February 2-7, 2018}, pp.\  3538--3545. {AAAI} Press, 2018.

\bibitem[Li et~al.(2019{\natexlab{b}})Li, Fang, and Zhang]{li2019supervised}
Xiaoming Li, Hui Fang, and Jie Zhang.
\newblock Supervised user ranking in signed social networks.
\newblock In \emph{The Thirty-Third {AAAI} Conference on Artificial
  Intelligence, {AAAI} 2019, The Thirty-First Innovative Applications of
  Artificial Intelligence Conference, {IAAI} 2019, The Ninth {AAAI} Symposium
  on Educational Advances in Artificial Intelligence, {EAAI} 2019, Honolulu,
  Hawaii, USA, January 27 - February 1, 2019}, pp.\  184--191. {AAAI} Press,
  2019{\natexlab{b}}.

\bibitem[Li et~al.(2020)Li, Tian, Zhang, and Chang]{li2020learning}
Yu~Li, Yuan Tian, Jiawei Zhang, and Yi~Chang.
\newblock Learning signed network embedding via graph attention.
\newblock In \emph{The Thirty-Fourth {AAAI} Conference on Artificial
  Intelligence, {AAAI} 2020, The Thirty-Second Innovative Applications of
  Artificial Intelligence Conference, {IAAI} 2020, The Tenth {AAAI} Symposium
  on Educational Advances in Artificial Intelligence, {EAAI} 2020, New York,
  NY, USA, February 7-12, 2020}, pp.\  4772--4779. {AAAI} Press, 2020.

\bibitem[Oono \& Suzuki(2020)Oono and Suzuki]{oono2020graph}
Kenta Oono and Taiji Suzuki.
\newblock Graph neural networks exponentially lose expressive power for node
  classification.
\newblock In \emph{8th International Conference on Learning Representations,
  {ICLR} 2020, Addis Ababa, Ethiopia, April 26-30, 2020}. OpenReview.net, 2020.

\bibitem[Rogers(2010)]{rogers2010diffusion}
Everett~M Rogers.
\newblock \emph{Diffusion of innovations}.
\newblock Simon and Schuster, 2010.

\bibitem[Trefethen \& Bau~III(1997)Trefethen and
  Bau~III]{trefethen1997numerical}
Lloyd~N Trefethen and David Bau~III.
\newblock \emph{Numerical linear algebra}, volume~50.
\newblock Siam, 1997.

\bibitem[Velickovic et~al.(2018)Velickovic, Cucurull, Casanova, Romero,
  Li{\`{o}}, and Bengio]{DBLP:conf/iclr/VelickovicCCRLB18}
Petar Velickovic, Guillem Cucurull, Arantxa Casanova, Adriana Romero, Pietro
  Li{\`{o}}, and Yoshua Bengio.
\newblock Graph attention networks.
\newblock In \emph{6th International Conference on Learning Representations,
  {ICLR} 2018, Vancouver, BC, Canada, April 30 - May 3, 2018, Conference Track
  Proceedings}. OpenReview.net, 2018.

\bibitem[Xu et~al.(2019{\natexlab{a}})Xu, Hu, Leskovec, and
  Jegelka]{DBLP:conf/iclr/XuHLJ19}
Keyulu Xu, Weihua Hu, Jure Leskovec, and Stefanie Jegelka.
\newblock How powerful are graph neural networks?
\newblock In \emph{7th International Conference on Learning Representations,
  {ICLR} 2019, New Orleans, LA, USA, May 6-9, 2019}. OpenReview.net,
  2019{\natexlab{a}}.

\bibitem[Xu et~al.(2019{\natexlab{b}})Xu, Hu, Wu, and Du]{xu2019link}
Pinghua Xu, Wenbin Hu, Jia Wu, and Bo~Du.
\newblock Link prediction with signed latent factors in signed social networks.
\newblock In \emph{Proceedings of the 25th {ACM} {SIGKDD} International
  Conference on Knowledge Discovery {\&} Data Mining, {KDD} 2019, Anchorage,
  AK, USA, August 4-8, 2019}, pp.\  1046--1054. {ACM}, 2019{\natexlab{b}}.

\bibitem[Yang et~al.(2007)Yang, Cheung, and Liu]{yang2007community}
Bo~Yang, William~K. Cheung, and Jiming Liu.
\newblock Community mining from signed social networks.
\newblock \emph{{IEEE} Trans. Knowl. Data Eng.}, 19\penalty0 (10):\penalty0
  1333--1348, 2007.

\bibitem[Yao et~al.(2019)Yao, Mao, and Luo]{yao2019graph}
Liang Yao, Chengsheng Mao, and Yuan Luo.
\newblock Graph convolutional networks for text classification.
\newblock In \emph{The Thirty-Third {AAAI} Conference on Artificial
  Intelligence, {AAAI} 2019, The Thirty-First Innovative Applications of
  Artificial Intelligence Conference, {IAAI} 2019, The Ninth {AAAI} Symposium
  on Educational Advances in Artificial Intelligence, {EAAI} 2019, Honolulu,
  Hawaii, USA, January 27 - February 1, 2019}, pp.\  7370--7377. {AAAI} Press,
  2019.

\bibitem[Yoo et~al.(2017)Yoo, Jo, and Kang]{conf/icdm/YooJK17}
Jaemin Yoo, Saehan Jo, and U.~Kang.
\newblock Supervised belief propagation: Scalable supervised inference on
  attributed networks.
\newblock In \emph{2017 {IEEE} International Conference on Data Mining, {ICDM}
  2017, New Orleans, LA, USA, November 18-21, 2017}, pp.\  595--604, 2017.

\end{thebibliography}

\newpage
\appendix
\section{Appendix}

\subsection{Convergence Analysis}
\label{sec:appendix:convergence_analysis}
\addtocounter{theorem}{-1}
\addtocounter{proof}{-1}
\begin{theorem}[Convergence of Signed Random Walk Diffusion]
The diffused features in $\T{k}$ converge to equilibrium for $c \in (0, 1)$ as follows:
\begin{align*}
	\mat{T}^{*}=\lim_{k \rightarrow \infty}\T{k}
	&= \lim_{k \rightarrow \infty}\left(
		\sum_{i = 0}^{k-1} (1-c)^{i}\nB^{i}
	\right)\tQ = (\mat{I} - (1-c)\nB)^{-1}\tQ \;\;\;\;\;\;\;\;\;\;  (\tQ \coloneqq c\Q)  
\end{align*}
If we iterate Equation~\eqref{eq:srwdiff:vectorized} $K$ times for $1 \leq k \leq K$, the exact solution $\mat{T}^{*}$ is approximated as
\begin{align*}
	\mat{T}^{*} \approx \T{K} = \tQ + (1-c)\nB\tQ +   \cdots + (1-c)^{K-1}\nB^{K-1}\tQ
	+ (1-c)^{K}\nB^{K}\T{0}
\end{align*}
\normalsize
where $\lVert \mat{T}^{*} - \T{K} \rVert_{1} \leq (1-c)^{K}\lVert \mat{T}^{*} - \T{0} \rVert_{1}$, and $\T{0} = \begin{bmatrix}
		\P{0} \\
		\M{0}
	\end{bmatrix}$ is the initial value of Equation~\eqref{eq:srwdiff:vectorized}.
\QEDB
\end{theorem}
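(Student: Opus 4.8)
The plan is to establish the result in three stages: first the convergence of the Neumann-style series, then the closed form of the limit, and finally the exponential error bound for the $K$-step truncation. The linchpin of everything is controlling the spectral radius of $\nB$, so I would begin there.

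\textbf{Step 1: Bounding the spectral radius of $\nB$.} The key observation is that $\nA = \mat{D}^{-1}(\mat{A}_{+}+\mat{A}_{-})$ is row-stochastic (each row sums to $1$ by definition of $\mat{D}_{ii}=|\ON{i}|$, assuming every node has an outgoing edge; otherwise one restricts to non-isolated nodes or adds self-loops), hence $\nA$ has $\ell_\infty$-operator norm exactly $1$ and spectral radius $\le 1$. I want the analogous statement for $\nB=\begin{bmatrix}\nApT & \nAnT\\ \nAnT & \nApT\end{bmatrix}$. The cleanest route is to note that $\nB$ is a nonnegative matrix and that the \emph{column} sums of $\nB$ equal the row sums of $\nB^\top=\begin{bmatrix}\nAp & \nAn\\ \nAn & \nAp\end{bmatrix}$; the $i$-th row of $\nB^\top$ (for $i\le n$) sums to $\sum_j (\nAp)_{ij}+\sum_j(\nAn)_{ij} = (\nA \mathbf{1})_i = 1$, and likewise for $i>n$. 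Therefore $\|\nB\|_1 = \|\nB^\top\|_\infty = 1$, so $\rho(\nB)\le\|\nB\|_1 = 1$. (Equivalently one can argue $\nB$ is similar, via the permutation swapping the two blocks, to the transition structure of the lifted signed-walk chain on $2n$ states, which is substochastic/stochastic.) This gives $\rho((1-c)\nB)=(1-c)\rho(\nB)\le 1-c<1$ for $c\in(0,1)$.

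\textbf{Step 2: Convergence and closed form.} Unrolling the recurrence $\T{k}=(1-c)\nB\T{k-1}+\tQ$ gives $\T{k}=\big(\sum_{i=0}^{k-1}(1-c)^i\nB^i\big)\tQ + (1-c)^k\nB^k\T{0}$, which is exactly Equation~\eqref{eq:srwdiff:approx} with $K=k$. Since $\rho((1-c)\nB)<1$, the Neumann series $\sum_{i\ge0}(1-c)^i\nB^i$ converges to $(\mat{I}-(1-c)\nB)^{-1}$ (the inverse exists precisely because $1$ is not an eigenvalue of $(1-c)\nB$), and the remainder $(1-c)^k\nB^k\T{0}\to\mat{0}$ because $\|(1-c)^k\nB^k\|_1\le(1-c)^k\to0$. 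Taking $k\to\infty$ yields $\mat{T}^{*}=(\mat{I}-(1-c)\nB)^{-1}\tQ$ and shows the limit is independent of $\T{0}$, proving Equation~\eqref{eq:srwdiff:exact}.

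\textbf{Step 3: The error bound.} Subtracting the unrolled identity from $\mat{T}^{*}=\big(\sum_{i=0}^{K-1}(1-c)^i\nB^i\big)\tQ + (1-c)^K\nB^K\mat{T}^{*}$ (apply the fixed-point equation $K$ times to $\mat{T}^{*}$ itself) gives $\mat{T}^{*}-\T{K} = (1-c)^K\nB^K(\mat{T}^{*}-\T{0})$. Then $\|\mat{T}^{*}-\T{K}\|_1 \le (1-c)^K\|\nB^K\|_1\,\|\mat{T}^{*}-\T{0}\|_1 \le (1-c)^K\|\mat{T}^{*}-\T{0}\|_1$ using the submultiplicativity of $\|\cdot\|_1$ and $\|\nB\|_1\le1$ from Step~1, which is the claimed inequality.

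\textbf{Main obstacle.} The only genuinely non-routine point is Step~1: identifying the right matrix norm in which $\nB$ is nonexpansive. The subtlety is that $\nB$ is built from the \emph{transposed} normalized adjacency matrices $\nApT,\nAnT$, so it is the \emph{columns} of $\nB$ (not its rows) that inherit the stochasticity of $\nA$; getting this bookkeeping right — and handling the edge case of nodes with zero out-degree so that $\mat{D}^{-1}$ is well-defined and $\nA\mathbf{1}=\mathbf{1}$ actually holds — is where care is needed. Everything downstream is a standard Neumann-series argument.
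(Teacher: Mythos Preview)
Your proposal is correct and follows essentially the same approach as the paper: bounding $\|\nB\|_1\le 1$ via the column sums of $\nB$ (you phrase this through the row sums of $\nB^\top$, which is the same computation), then invoking the Neumann series for convergence and using the nonexpansiveness in $\|\cdot\|_1$ for the error bound. The only cosmetic difference is that in Step~3 you write the exact identity $\mat{T}^{*}-\T{K}=(1-c)^K\nB^K(\mat{T}^{*}-\T{0})$ and bound it in one shot, whereas the paper chains the one-step contraction $\|\mat{T}^{*}-\T{k}\|_1\le(1-c)\|\mat{T}^{*}-\T{k-1}\|_1$ iteratively; both are equivalent, and your handling of the deadend case matches the paper's convention $\mat{D}^{-1}_{uu}=0$ for nodes with no outgoing edges.
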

\begin{proof}
The iteration of Equation~\eqref{eq:srwdiff:vectorized} is written as follows:
\begin{align}
		\T{k} &= (1-c) \nB\T{k-1} + c\Q \nonumber\\
			  &= \left((1-c) \nB\right)^{2} \T{k-2} + \left((1-c)\nB + \mat{I}\right)\tQ \nonumber\\
			  &= \cdots \nonumber\\
			  &= \left((1-c) \nB\right)^{k} \T{0} + \left(\sum_{i=0}^{k-1}\left((1-c)^{i}\nB^{i}\right)\right)\tQ. \label{eq:srwdiff_expanded}
\end{align}
Note that the spectral radius $\rho(\nB)$ is less than or equal to $1$ by Theorem~\ref{lemma:spectral}; thus,  for $0 < c < 1$, the spectral radius of $(1-c)\nB$ is less than $1$, i.e., $\rho((1-c)\nB) = (1-c)\rho(\nB) \leq (1-c) < 1$.
Hence, if $k \rightarrow \infty$, the power of $(1-c)\nB$ converges to $\mat{0}$, i.e., $\lim_{k \rightarrow \infty}(1-c)^{k}\nB^{k} = \mat{0}$.
Also, the second term in Equation~\eqref{eq:srwdiff_expanded} becomes the infinite geometric series of $(1-c)\nB$ which converges as the following equation:
\begin{align*}
	\mat{T}^{*} = \lim_{k \rightarrow \infty} \T{k} = \mat{0} + \lim_{k \rightarrow \infty} \left(\sum_{i=0}^{k-1}\left((1-c)^{i}\nB^{i}\right)\right)\tQ = (\mat{I} - (1-c)\nB)^{-1}\tQ
\end{align*}
where the convergence always holds if $\rho((1-c)\nB) < 1$.
The converged solution $\mat{T}^{*}$  satisfies $\mat{T}^{*} = (1-c)\nB\mat{T}^{*} + c\Q$.
Also, $\mat{T}^{*}$ is approximated as Equation~\eqref{eq:srwdiff:approx}.
Then, the approximation error \smf{$\lVert \mat{T}^{*} - \T{K} \rVert_{1}$} is bounded as follows:
\begin{align}
	\begin{split}
	\lVert \mat{T}^{*} - \T{K}\rVert_{1} &= \lVert (1-c)\nB\mat{T}^{*} - (1-c)\nB\T{K-1}\rVert_{1} \leq (1-c)\lVert \nB \lVert_{1}\lVert \mat{T}^{*} - \T{K-1}\rVert_{1}\\
	&\leq (1-c)\lVert \mat{T}^{*} - \T{K-1}\rVert_{1} \leq \cdots \\
	&\leq (1-c)^{K}\lVert \mat{T}^{*} - \T{0}\rVert_{1}
	\end{split}
\end{align}
where $\lVert \cdot \rVert_{1}$ is $L_1$-norm of a matrix.
Note that the bound $\lVert \nB \rVert_{1} \leq 1$ of Theorem~\ref{lemma:spectral} is used in the above derivation.
\QEDB
\end{proof}

\begin{theorem}[Bound of Spectral Radius of $\nB$]
\label{lemma:spectral}
The spectral radius of $\nB$ in Equation~\eqref{eq:srwdiff:vectorized} is less than or equal to $1$, i.e., $\rho(\nB) \leq \lVert \nB \rVert_{1} \leq 1$. \QEDB
\end{theorem}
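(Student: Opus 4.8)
The plan is to bound the spectral radius of $\nB$ via the inequality $\rho(\nB) \leq \lVert \nB \rVert_{1}$, which holds for any induced matrix norm, and then to show directly that $\lVert \nB \rVert_{1} \leq 1$. Recall that the induced $L_1$-norm of a matrix equals its maximum absolute column sum. So the entire task reduces to a combinatorial counting argument: I would examine an arbitrary column of $\nB = \begin{bmatrix} \nApT & \nAnT \\ \nAnT & \nApT \end{bmatrix}$ and show its absolute entries sum to at most $1$.

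First I would recall that $\nAp = \mat{D}^{-1}\mat{A}_{+}$ and $\nAn = \mat{D}^{-1}\mat{A}_{-}$, where $\mat{D}_{ii} = |\ON{i}|$ is the out-degree of node $i$ counting both positive and negative outgoing edges. Thus row $i$ of $\nAp$ (resp. $\nAn$) has entries $1/|\ON{i}|$ in the columns corresponding to positive (resp. negative) out-neighbors of $i$, and the combined row sum of $\nAp$ and $\nAn$ together is exactly $|\ON{i}|/|\ON{i}| = 1$ whenever $|\ON{i}| > 0$, and $0$ otherwise. Transposing, this says: for each node $i$, the $i$-th column of $\nApT$ plus the $i$-th column of $\nAnT$ has absolute entries summing to at most $1$ (exactly $1$ when $i$ has positive out-degree). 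Since all entries are nonnegative, $\lVert \nApT \rVert$-type column sums and $\lVert \nAnT \rVert$-type column sums add coherently.

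Next I would handle the block structure of $\nB$. A column of $\nB$ is indexed by a node $j$ and a sign ($+$-block or $-$-block); say it is the column for node $j$ in the first block. Its entries come from stacking the $j$-th column of $\nApT$ on top of the $j$-th column of $\nAnT$. Hence the absolute column sum of this column of $\nB$ is exactly (sum over the $j$-th column of $\nApT$) $+$ (sum over the $j$-th column of $\nAnT$), which by the previous paragraph is $\leq 1$. The column for node $j$ in the second block stacks the $j$-th column of $\nAnT$ on top of the $j$-th column of $\nApT$ — the same two numbers in the opposite order — so its absolute column sum is also $\leq 1$. Taking the maximum over all $2n$ columns gives $\lVert \nB \rVert_{1} \leq 1$, and combined with $\rho(\nB) \leq \lVert \nB \rVert_{1}$ this completes the proof.

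The only subtlety — and the step I would be most careful about — is the treatment of nodes with zero out-degree, for which $\mat{D}_{ii} = 0$ and $\mat{D}^{-1}$ is not literally defined. I would address this by adopting the standard convention that such rows of $\nAp$ and $\nAn$ are simply zero (equivalently, one may assume every node has at least one outgoing edge, which holds for the datasets, or add self-loops), so the corresponding column of $\nB$ is zero and the bound $\leq 1$ still holds; one should state this convention explicitly. Everything else is a routine column-sum computation with no real obstacle.
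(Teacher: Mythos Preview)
Your proposal is correct and follows essentially the same approach as the paper: bound $\rho(\nB)$ by $\lVert \nB \rVert_{1}$ via the spectral radius theorem, then show each column of $\nB$ sums to at most $1$ by observing that the $j$-th columns of $\nApT$ and $\nAnT$ together correspond to the $j$-th row of $\nAp + \nAn = \mat{D}^{-1}(\mat{A}_{+}+\mat{A}_{-})$, which sums to $1$ for non-deadend nodes and $0$ otherwise. The paper packages the same computation as $\vectt{1}_{2n}\nB$ rather than inspecting columns individually, and it likewise adopts the convention $\mati{D}_{uu}=0$ for deadends; your treatment of that subtlety matches theirs.
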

\begin{proof}
According to spectral radius theorem~\citep{trefethen1997numerical}, \smf{$\rho(\nB) \leq \lVert \nB \rVert_{1}$} where $\lVert \cdot \rVert_{1}$ denotes $L_{1}$-norm of a given matrix, indicating the maximum absolute column sum of the matrix.
Note that the entries of \smf{$\nB$} are non-negative probabilities; thus, the absolute column sums of \smf{$\nB$} are equal to its column sums which are obtained as follows:
\begin{align}
	\begin{split}
	 \vectt{1}_{2n} \nB &=
	 \begin{bmatrix}
	 	\vectt{1}_{n} \nApT + \vectt{1}_{n} \nAnT &
	 \vectt{1}_{n} \nAnT + \vectt{1}_{n} \nApT
	 \end{bmatrix} =
	 \begin{bmatrix}
	 	\vectt{1}_{n}\nAT & \vectt{1}_{n}\nAT
	 \end{bmatrix} =
	 \begin{bmatrix}
 		\vectt{b} & \vectt{b}
	 \end{bmatrix}
	\end{split}
\end{align}
where $\nAT = \nApT + \nAnT$, and $\vect{1}_{n}$ is an $n$-dimensional one vector.
Note that $\mattt{A}_{s} = \matt{A}_{s}\mati{D}$ for sign $s$ where $\mat{D}$ is a diagonal out-degree matrix (i.e., $\mat{D}_{uu} = |\ON{u}|$).
Then, $\vectt{1}_{n}\nAT$ is represented as
\begin{align*}
	\vectt{1}_{n}\nAT = \vectt{1}_{n}(\matt{A}_{+}+\matt{A}_{-})\mati{D} = \vectt{1}_{n}\matt{|A|}\mati{D} = (|\mat{A}|\vect{1}_{n})^{\top}\mati{D} = \vectt{b}
\end{align*}
where $\mat{|A|}=\mat{A}_{+}+\mat{A}_{-}$ is the absolute adjacency matrix.
The $u$-th entry of $|\mat{A}|\vect{1}_{n}$ indicates the out-degree of node $u$, denoted by $|\ON{u}|$.
Note that $\mati{D}_{uu}$ is $1/|\ON{u}|$ if $u$ is a non-deadend.
Otherwise, $\mati{D}_{uu} = 0$ (i.e., a deadend node has no outgoing edges).
Hence, the $u$-th entry of $\vectt{b}$ is $1$ if node $u$ is not a deadend, or $0$ otherwise; its maximum value is less than or equal to $1$.
Therefore, $\rho(\nB) \leq \lVert \nB \rVert_{1} \leq 1$.
\QEDB
\end{proof}

\subsection{Time Complexity Analysis}
\label{sec:appendix:time_complexity}
\begin{theorem}[Time Complexity of \method]
The time complexity of the $l$-th SGD layer is $O(Km\d{l} + n\d{l-1}\d{l})$
where $K$ is the number of diffusion steps,
$\d{l}$ is the feature dimension of the $l$-th layer, and
$m$ and $n$ are the number of edges and nodes, respectively.
Assuming all of $\d{l}$ are set to $d$,
\method with $L$ SGD layers takes $O(LKmd + Lnd^{2})$ time.
\QEDB
\end{theorem}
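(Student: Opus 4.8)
The plan is to bound the two contributions of each SGD layer separately and then add them: first the signed random walk diffusion $\mathcal{F}_{d}(\cdot)$, and second the two linear transformations $\W{l}_{t}$ and $\W{l}_{n}$ together with the activation and skip connection in Equation~\eqref{eq:sgdnet:non_linear_trans}. I would start with the feature transformation $\tH{l} = \H{l-1}\W{l}_{t}$, which is an $(n \times \d{l-1})$ by $(\d{l-1} \times \d{l})$ matrix product costing $O(n\d{l-1}\d{l})$; the final transformation $[\dP{l}\vert\rvert\dM{l}]\W{l}_{n}$ multiplies an $(n \times 2\d{l})$ matrix by a $(2\d{l} \times \d{l})$ matrix, costing $O(n\d{l}^{2})$, and the skip-connection addition plus $\phi(\cdot)$ is $O(n\d{l})$. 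So the non-diffusion part of the layer is $O(n\d{l-1}\d{l})$, absorbing the lower-order $O(n\d{l}^{2})$ and $O(n\d{l})$ terms since $\d{l} \le \d{l-1}\d{l}$ whenever $\d{l-1}\ge 1$ (and similarly $\d{l}^2 = \d{l}\cdot\d{l}$; when $\d{l-1}\ge\d{l}$ this is dominated, and in general one just keeps both terms or notes both are $O(n d^2)$ under the uniform-dimension assumption).

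For the diffusion module I would argue that each of the $K$ iterations of Equation~\eqref{eq:srwdiff:vectorized} costs $O(m\d{l})$. The key observation is that $\nB$ is a $2n \times 2n$ block matrix built from $\nApT$ and $\nAnT$, and these normalized signed adjacency matrices are sparse with a total of $m$ nonzeros (one per signed edge), so the sparse matrix–matrix product $\nB\T{k-1}$ against the $(2n \times \d{l})$ dense matrix $\T{k-1}$ costs $O(m\d{l})$ — each nonzero of $\nB$ touches one length-$\d{l}$ row. Adding $c\Q$ and scaling by $(1-c)$ is $O(n\d{l})$ per step, which is dominated by $O(m\d{l})$ for connected graphs. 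Over $K$ steps this gives $O(Km\d{l})$. Precomputing $\nApT$, $\nAnT$, and the out-degrees $\mat{D}$ from the edge list is a one-time $O(m)$ cost, negligible here.

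Summing the two parts gives $O(Km\d{l} + n\d{l-1}\d{l})$ for the $l$-th SGD layer, which is the claimed bound. Under the uniform-dimension assumption $\d{l}=d$ for all $l$, the per-layer cost becomes $O(Kmd + nd^{2})$, and stacking $L$ layers multiplies through to $O(LKmd + Lnd^{2})$; the loss-layer edge-feature construction and softmax over $m$ edges is $O(md)$ and does not change the bound. I do not anticipate a genuine obstacle — the only point requiring a little care is the accounting of which term dominates which (in particular whether the $O(n\d{l}^2)$ from $\W{l}_{n}$ is subsumed), and being explicit that sparsity of the signed adjacency matrices is what makes each diffusion step linear in $m$ rather than quadratic in $n$.
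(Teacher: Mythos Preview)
Your proposal is correct and follows essentially the same route as the paper: bound the dense feature-transform multiplications by $O(n\d{l-1}\d{l})$, bound each diffusion step by $O(m\d{l})$ via the sparsity of $\nB$ (which has $O(m)$ nonzeros), multiply by $K$ and $L$, and sum. If anything you are more careful than the paper, which does not separately account for the $O(n\d{l}^{2})$ cost of $\W{l}_{n}$ or the edge-feature/loss computation.
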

\begin{proof}
The feature transform operations require $O(n\d{l-1}\d{l})$ time due to their dense matrix multiplication.
Each iteration of the signed random walk diffusion in Equation~\eqref{eq:srwdiff:vectorized} takes $O(m\d{l})$ time due to the sparse matrix multiplication $\nB\T{k-1}$ where the number of non-zeros of $\nB$ is $O(m)$.
Thus, $O(Km\d{l})$ is required for $K$ iterations.
Overall, the total time complexity of the $l$-th SGD layer is $O(Km\d{l} + n\d{l-1}\d{l})$.
\QEDB
\end{proof}

\subsection{Pseudocode of \method}
\label{sec:appendix:pseudocode}
Algorithm~\ref{alg:method} describes \method's overall procedure which is depicted in Figure~\ref{fig:overview}.
Given signed adjacency matrix $\A$ and related hyper-parameters (e.g., numbers $L$ and $K$ of SGD layers and diffusion steps, respectively), \method produces the final hidden node features $\H{L}$ which are fed to a loss function as described in Section~\ref{sec:method:loss}.
It first computes the normalized matrices $\nAp$ and $\nAn$ (line~\ref{alg:method:normalization}).
Then, it performs the forward function of \method (lines~\ref{alg:method:forward:start}~$\sim$ \ref{alg:method:forward:end}).
The forward function repeats the signed random walk diffusion $K$ times (lines~\ref{alg:method:srwdiff:start}~$\sim$~\ref{alg:method:srwdiff:end}), and then performs the non-linear feature transformation skip-connected with $\H{l-1}$ (line~\ref{alg:method:non_linear_trans}).

\renewcommand{\algorithmiccomment}[1]{#1}
\begin{algorithm}[h!]
    \begin{algorithmic}[1]
        \caption{Pseudocode of \method}
        \label{alg:method}
        \REQUIRE signed adjacency matrix $\A$, initial node feature matrix $\mat{X}$, number $K$ of diffusion steps, number $L$ of SGD layers, and local feature injection ratio $c$
        \ENSURE hidden node feature matrix $\H{L}$
        \STATE compute normalized matrices for each sign, i.e., $\nAp = \mati{D}\A_{+}$ and $\nAn = \mati{D}\A_{-}$ \label{alg:method:normalization}
        \STATE initialize $\H{0}$ with $\mat{X}$
       	\FOR[\hfill $\triangleright$ \textit{start the forward function of \small{\method}}]{$l$ $\leftarrow$ $1$ to $L$} \label{alg:method:forward:start}
       		\STATE perform the feature transformation as {$\tH{l} \leftarrow \H{l-1}\W{l}_{t}$}
       		\STATE initialize $\P{0}$ with $\tH{l}$ and randomly initialized $\M{0}$ in $[-1, 1]$
       		\BlankLine
       		\FOR[\hfill $\triangleright$ \textit{perform the signed random walk diffusion in Equation~\eqref{eq:srwdiff}}]{$k \leftarrow 1$ to $K$} \label{alg:method:srwdiff:start}
         		\BlankLine
       			\STATE $\P{k} \leftarrow (1-c) (\nApT\P{k-1} + \nAnT\M{k-1}) + c\tH{l}$
				\STATE $\M{k} \leftarrow (1-c) (\nAnT\P{k-1} + \nApT\M{k-1})$
				\BlankLine
       		\ENDFOR \label{alg:method:srwdiff:end}
       		\BlankLine
       		\STATE set $\dP{l} \leftarrow \P{K}$ and $\dM{l} \leftarrow \M{K}$
       		\STATE compute $l$-th hidden node features $\H{l} \leftarrow \texttt{tanh}(\left[\dP{l} \vert\rvert \dM{l}\right]\W{l}_{n} + \H{l-1})$ \label{alg:method:non_linear_trans}
       		\BlankLine
       	\ENDFOR \label{alg:method:forward:end}
        \BlankLine
        \RETURN $\H{L}$
    \end{algorithmic}
\end{algorithm}

\subsection{Detailed Description of Datasets}
\label{sec:appendix:datasets}
The Bitcoin-Alpha and Bitcoin-OTC datasets~\citep{kumar2016edge} are extracted from directed online trust networks served by Bitcoin Alpha and Bitcoin OTC, respectively.
The Slashdot dataset~\citep{kunegis2009slashdot} is collected from Slashdot, a technology news site which allows a user to create positive or negative links to others.
The Epinions dataset~\citep{guha2004propagation} is a directed signed graph scraped from Epinions, a product review site in which users mark their trust or distrust to others.

The publicly available signed graphs do not contain initial node features even though they have been utilized as standard datasets in signed graph analysis.
Due to this reason, many previous works~\citep{derr2018signed,li2020learning} on GCN for signed graphs have exploited singular vector decomposition (SVD) to extract initial node features.
Thus, we follow this setup, i.e., $\mat{X}=\mat{U}\mat{\Sigma}_{d}$ is the initial feature matrix for all GCN-based models where $\A\simeq\mat{U}\mat{\Sigma}_{d_{i}}\matt{V}$ is obtained by a truncated SVD method, called Randomized SVD~\citep{halko2011finding}, with target rank $d_{i}=128$.
Note that the method is very efficient (i.e., its time complexity is $O(nd_{i}^2)$ where $n$ is the number of nodes) and performed only once as a preprocessing in advance; thus, it does not affect the computational performance of training and inference.


\subsection{Additional Experiments on Wikipedia Dataset}
\label{sec:appendix:wikipedia}
\begin{figure}[h]
    \centering
    \subfigure[Predictive performance in terms of AUC]{
    	\hspace{-2mm}
        \includegraphics[width=0.24\linewidth]{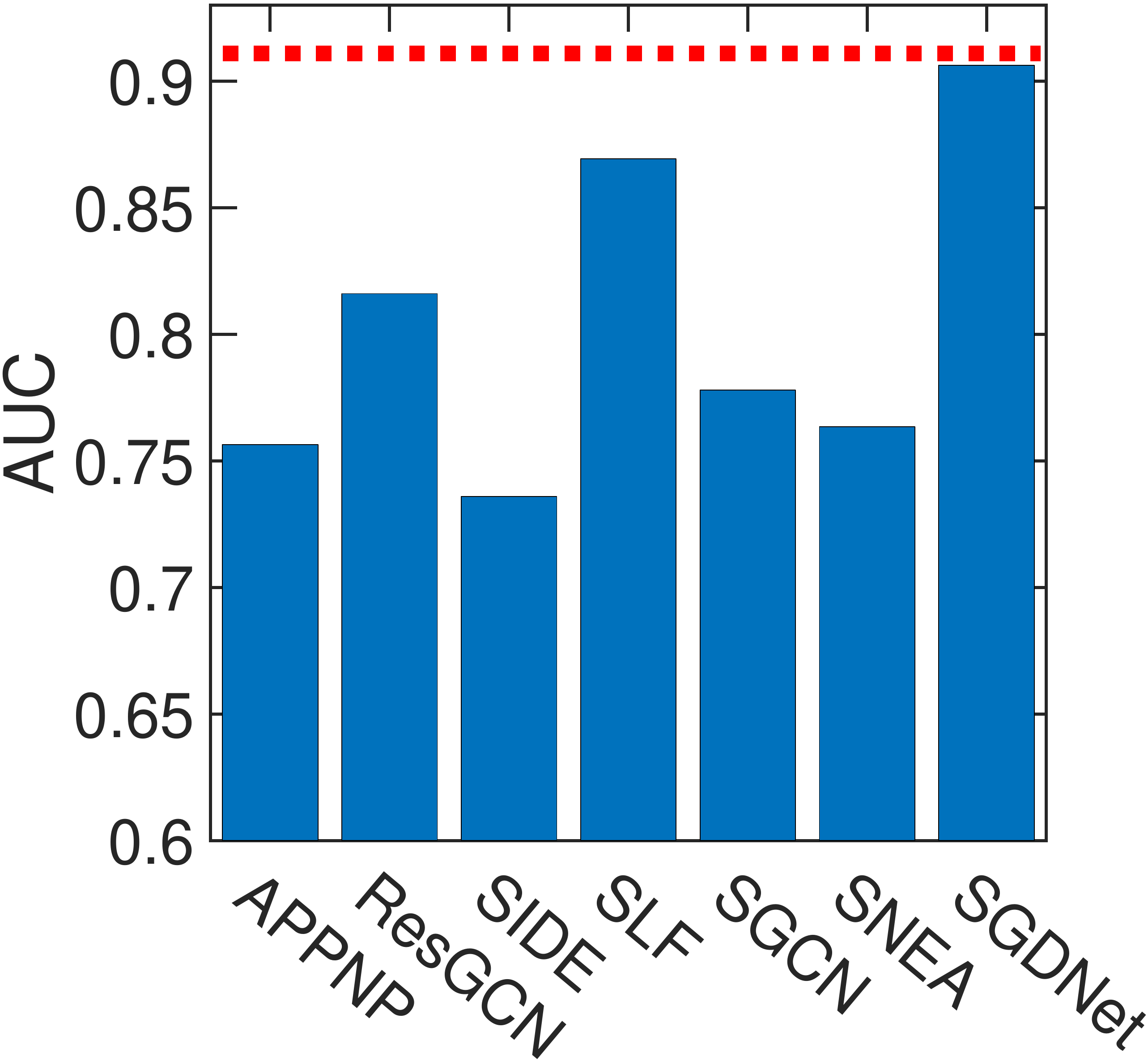}
        \label{fig:exp:wiki:auc}
    }
    \subfigure[Predictive performance in terms of F1-macro]{
        \includegraphics[width=0.235\linewidth]{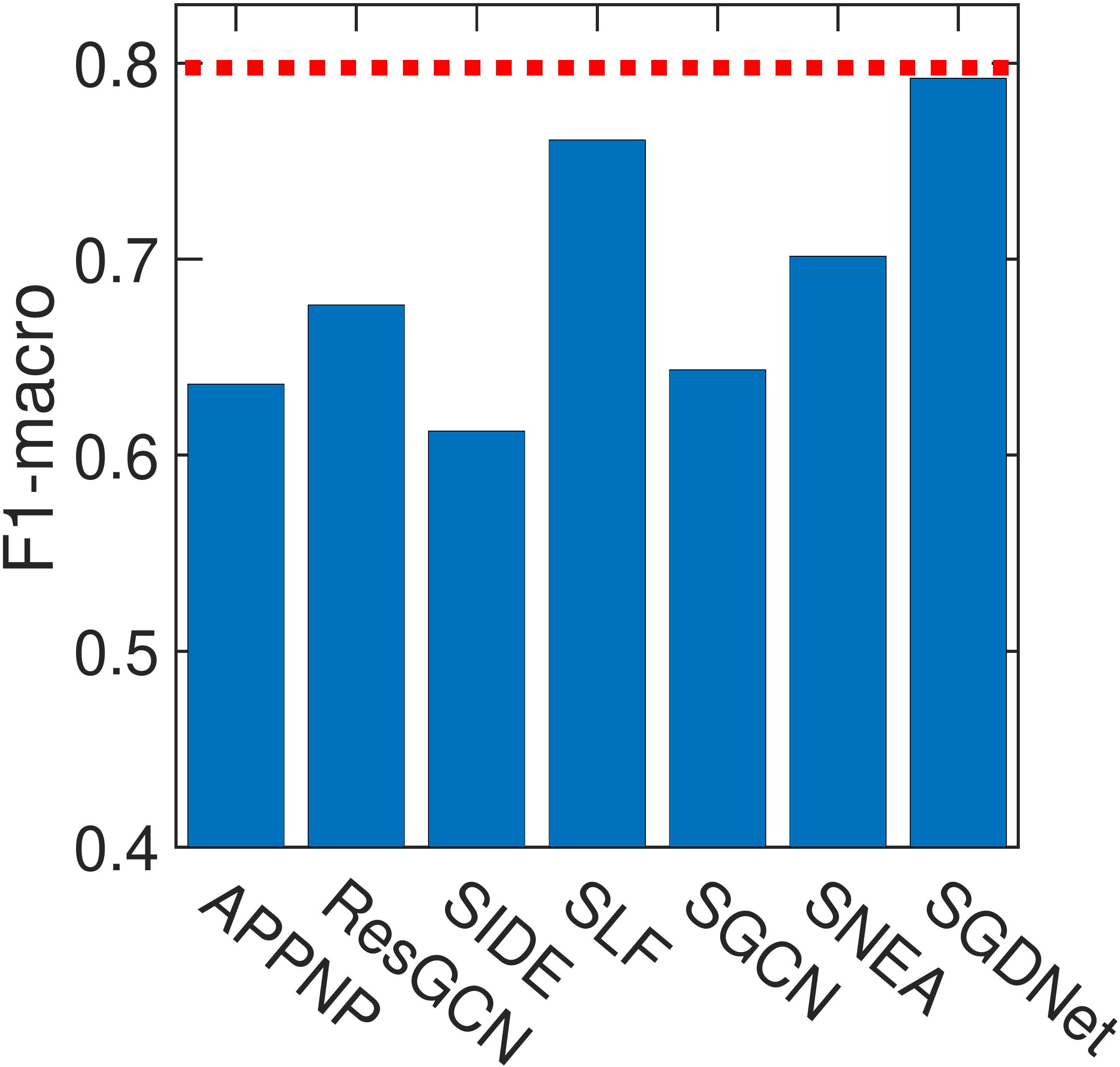}
        \label{fig:exp:wiki:f1_macro}
    }
    \subfigure[Effect of the feature diffusion]{
        \includegraphics[width=0.23\linewidth]{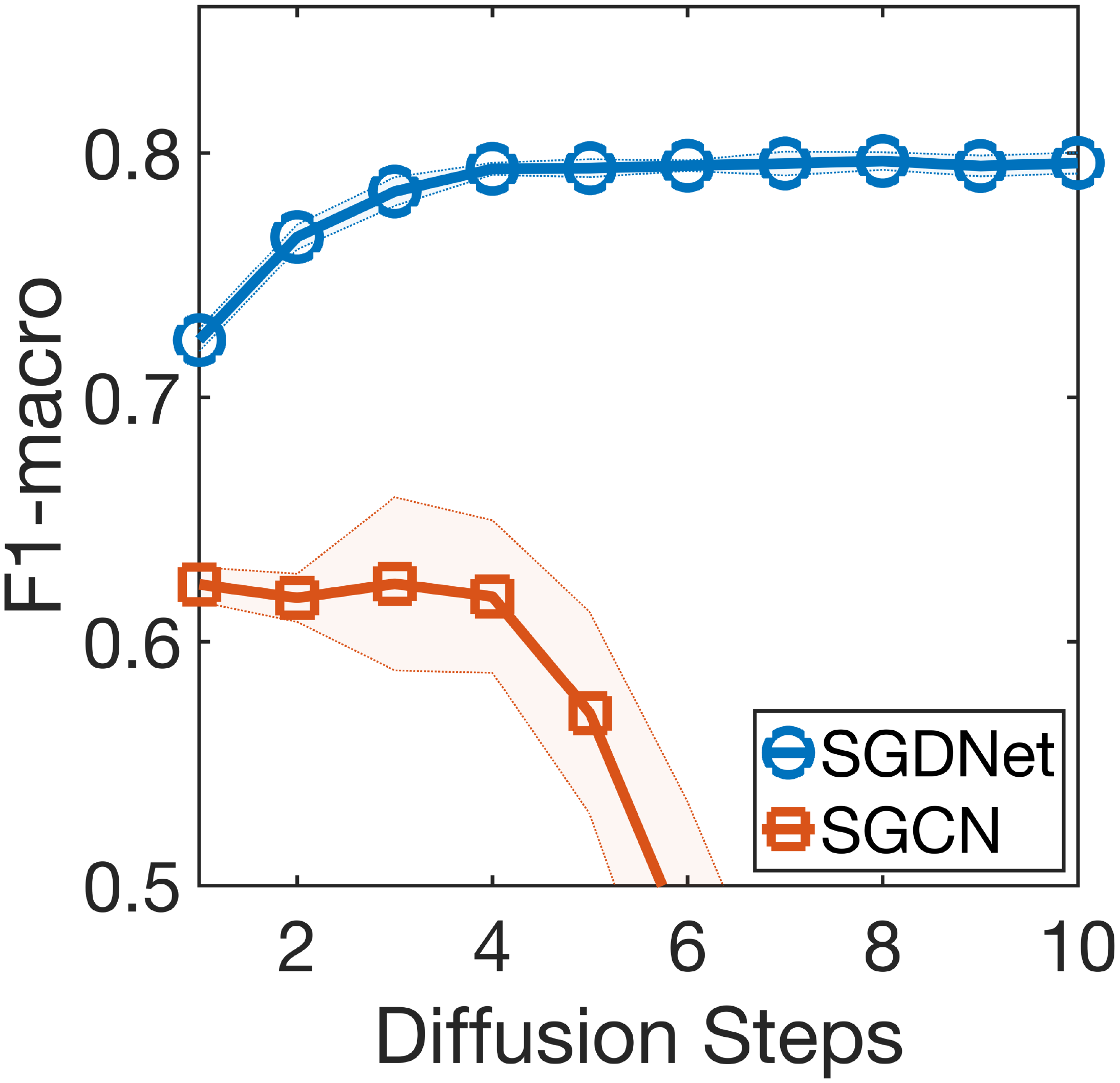}
        \label{fig:exp:wiki:diff}
    }
    \subfigure[Effect of the local feature injection ratio]{
        \includegraphics[width=0.23\linewidth]{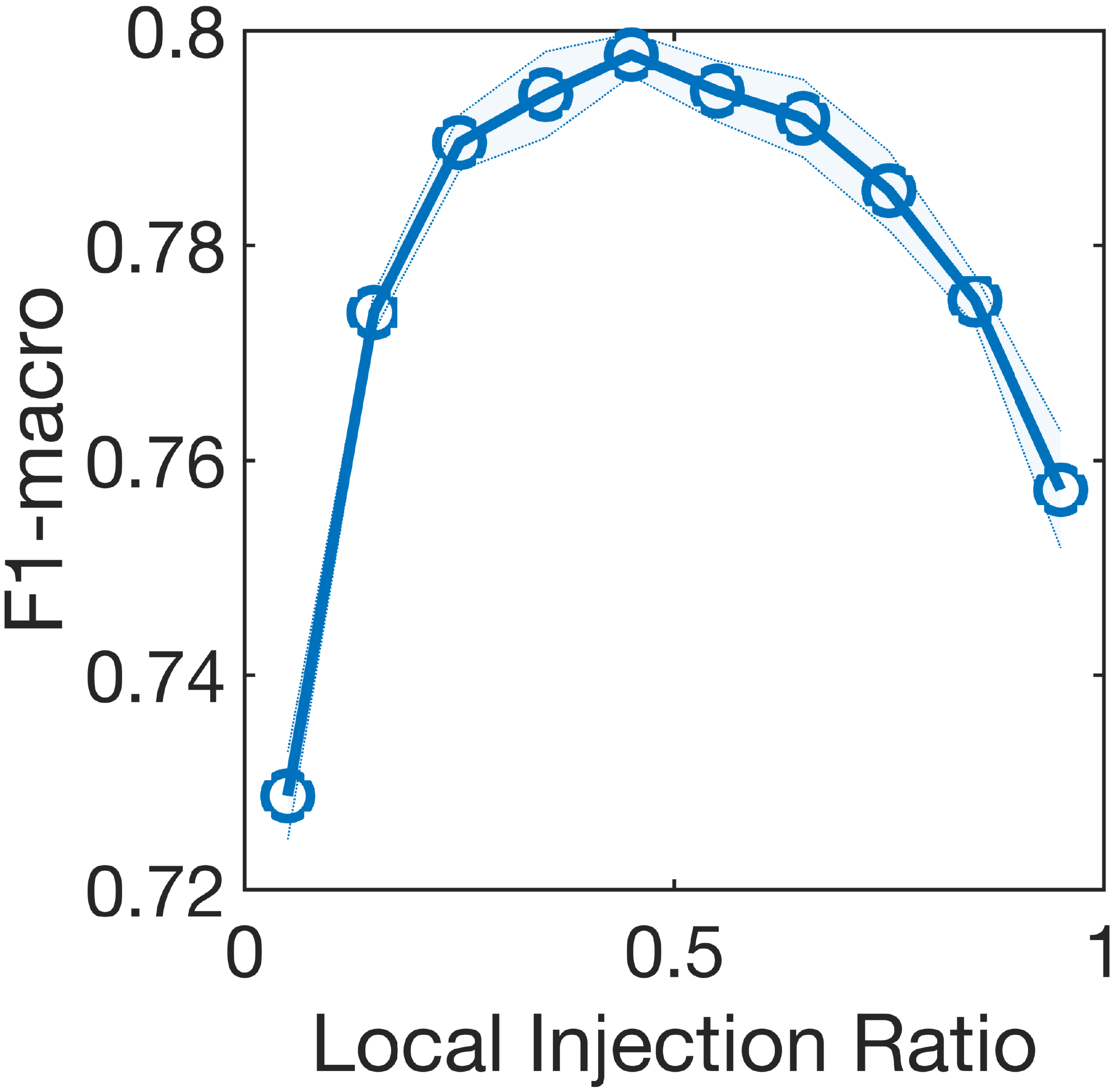}
		\label{fig:exp:wiki:inject}
    }
    \caption{
        \label{fig:exp:wiki}
        Experimental results on Wikipedia dataset.
    }
\end{figure}

We perform additional experiments on Wikipedia dataset~\citep{leskovec2010signed} which has been also frequently used in signed graph analysis.
The Wikipedia dataset is a signed graph representing the administrator election procedure in Wikipedia where a user can vote for ($+$) or against ($-$) a candidate.
The numbers of nodes and edges are $7,118$ and $103,675$, respectively.
Figure~\ref{fig:exp:wiki} shows the experimental results on the dataset.
As seen in Figures~\ref{fig:exp:wiki:auc}~and~\ref{fig:exp:wiki:f1_macro}, \method outperforms other methods in terms of AUC and F1-macro, respectively.
Figure~\ref{fig:exp:wiki:diff} indicates our diffusion mechanism still works on the Wikipedia dataset.
Figure~\ref{fig:exp:wiki:inject} shows the effect of the local feature injection ratio $c$, indicating properly selected $c$ such as $0.5$ is helpful for the performance.

\subsection{Effect of Embedding Dimension}
\label{sec:appendix:effect_dim}
\begin{figure}[h]
    \centering
    \includegraphics[width=0.6\linewidth]{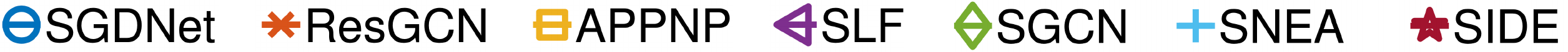}\\
    \subfigure[Bitcoin-Alpha]{
    	\hspace{-2mm}
        \includegraphics[width=0.235\linewidth]{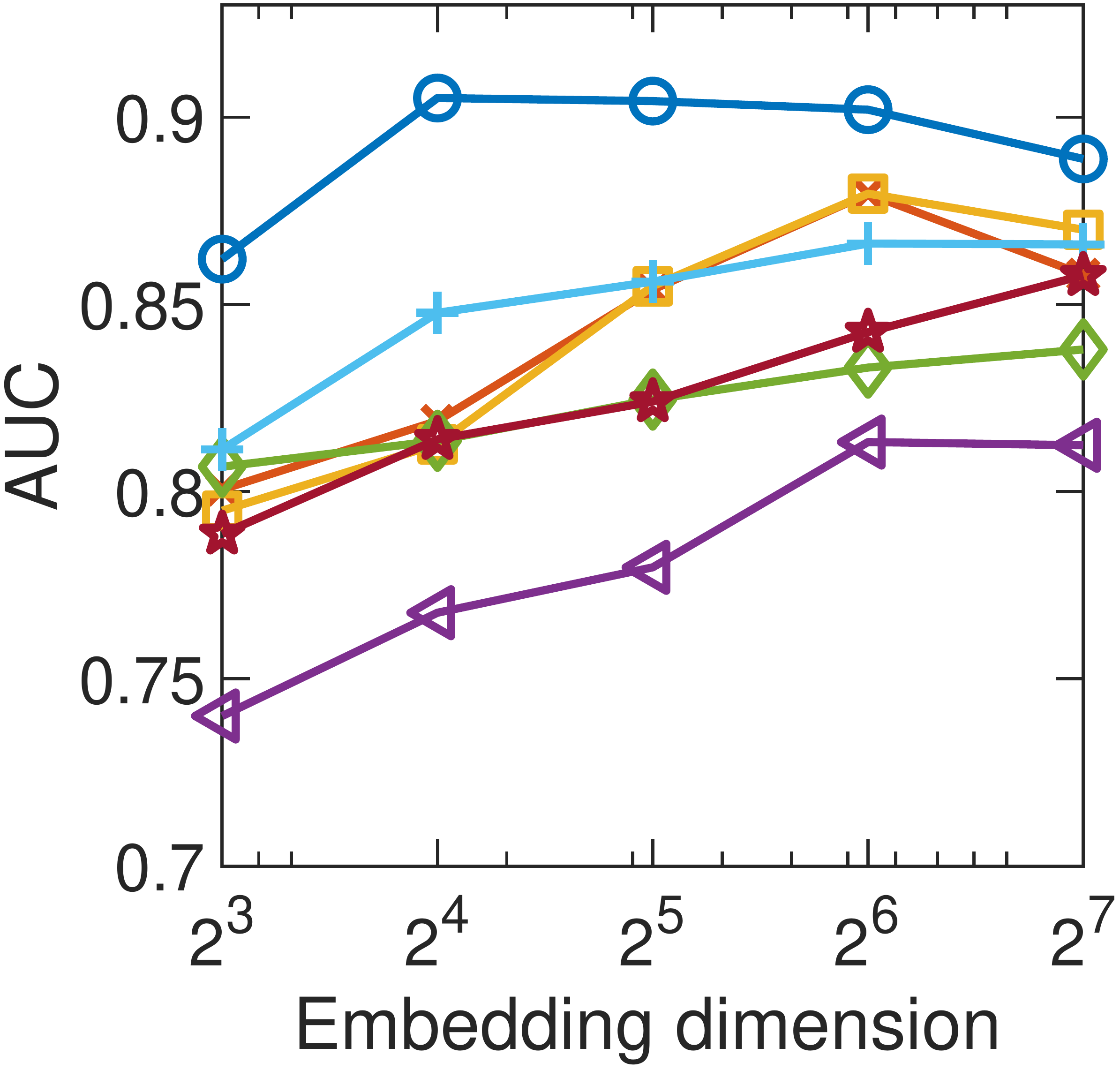}
        \label{fig:exp:effect_dim:bitcoin_alpha}
    }
    \subfigure[Bitcoin-OTC]{
        \includegraphics[width=0.235\linewidth]{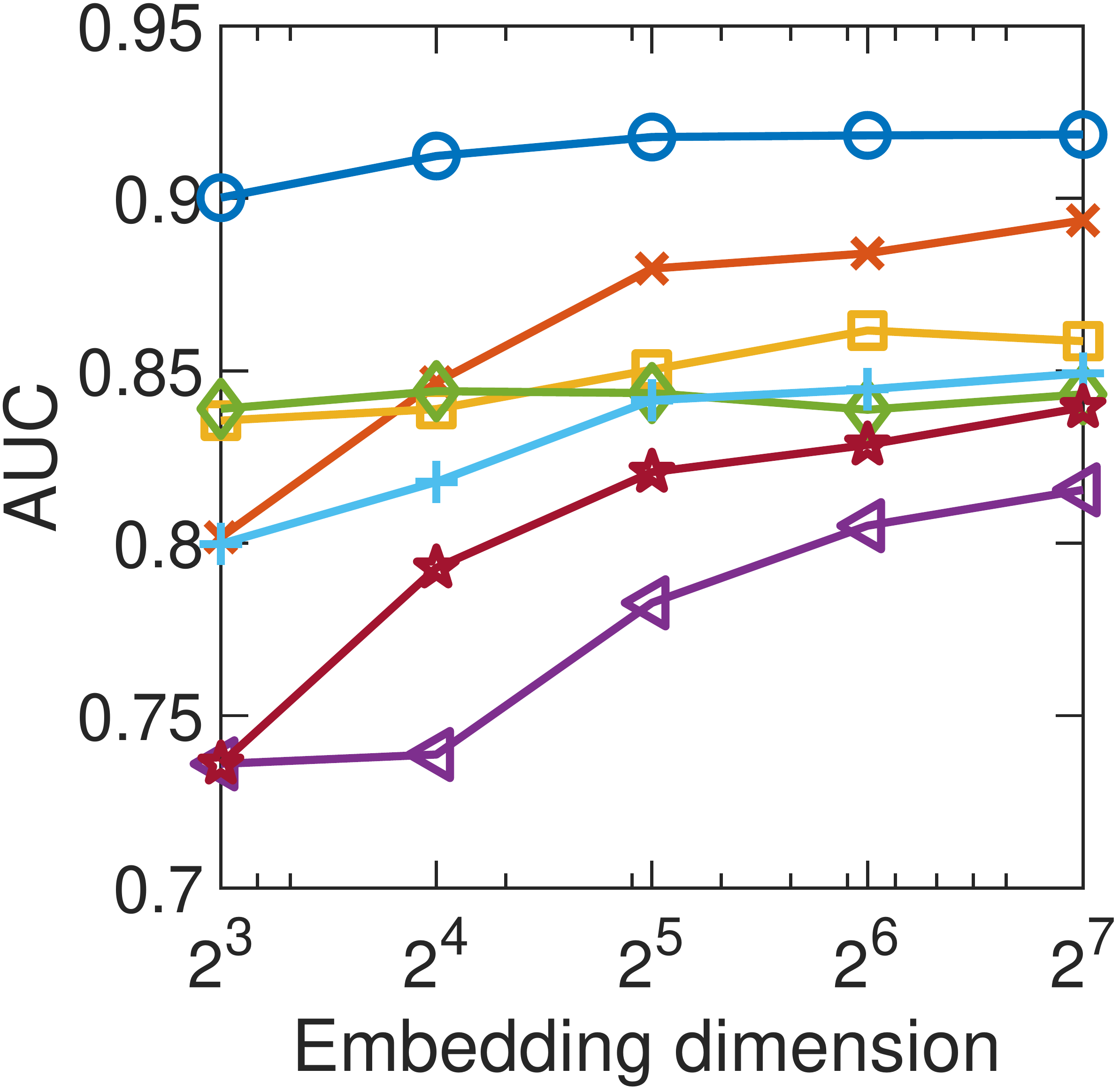}
        \label{fig:exp:effect_dim:bitcoin_otc}
    }
    \subfigure[Slashdot]{
        \includegraphics[width=0.235\linewidth]{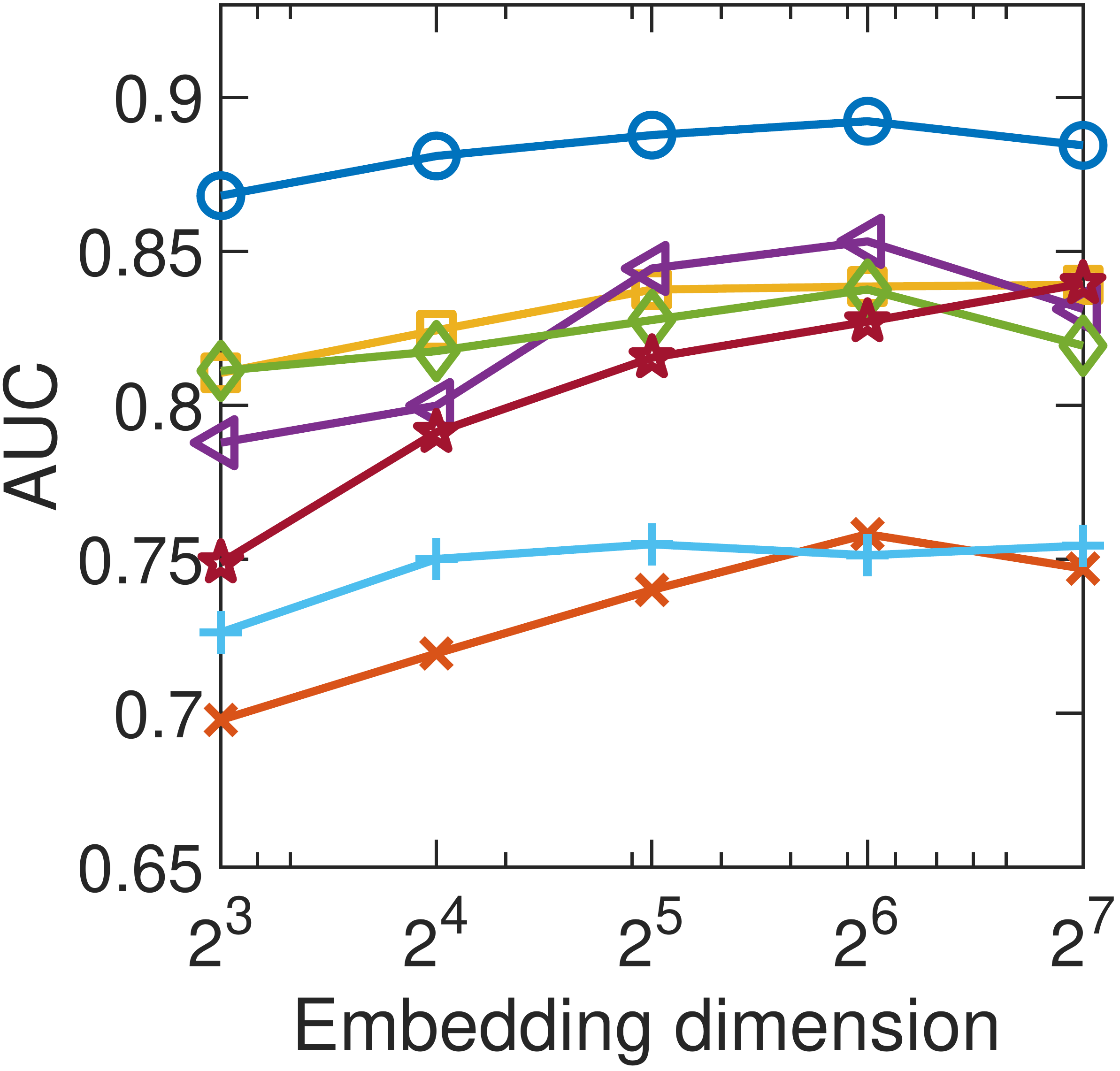}
        \label{fig:exp:effect_dim:slashdot}
    }
    \subfigure[Epinions]{
        \includegraphics[width=0.235\linewidth]{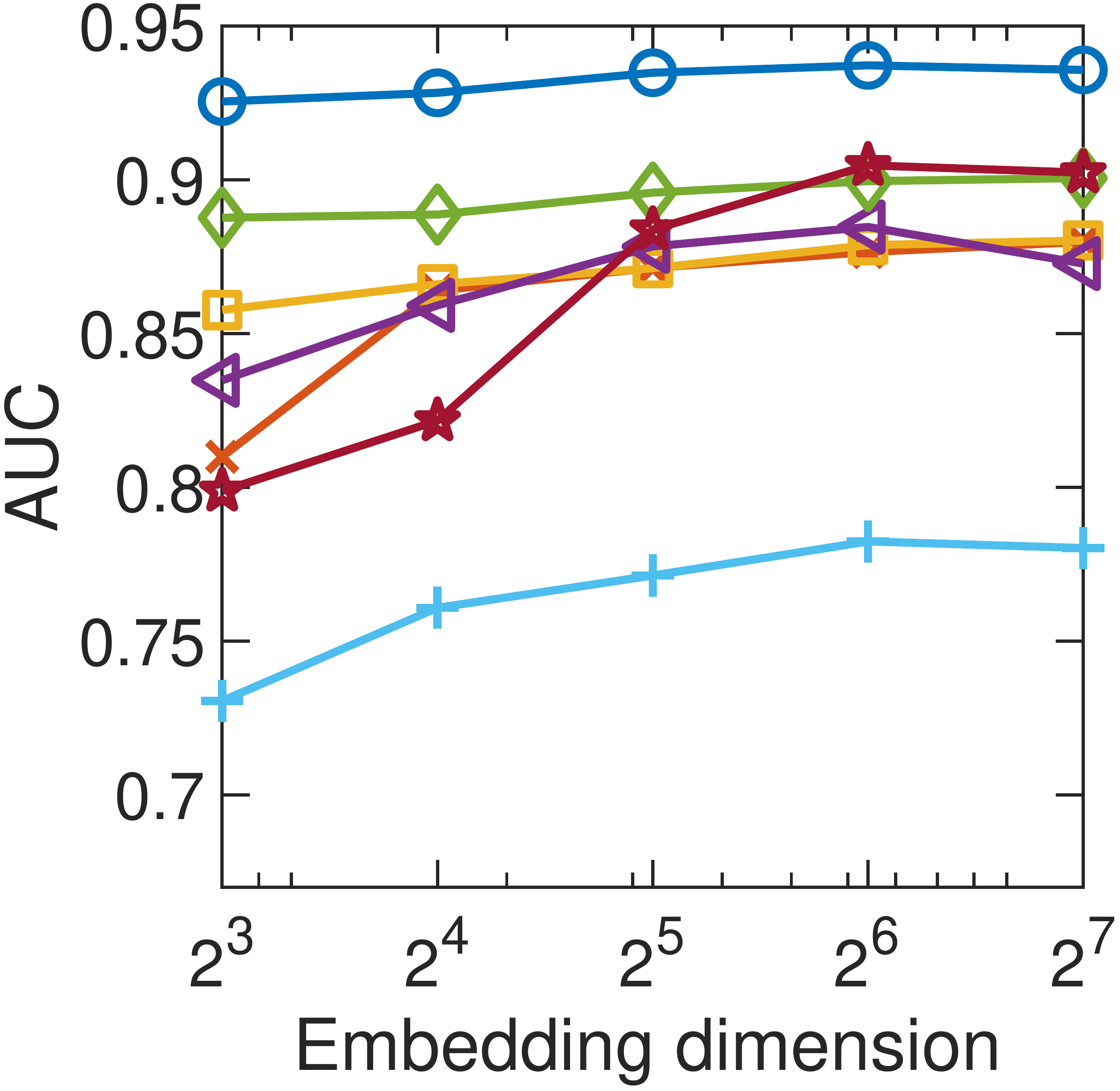}
		\label{fig:exp:effect_dim:epinions}
    }
    \caption{
        \label{fig:exp:effect_dim}
        Effect of the embedding dimension of each model.
    }
\end{figure}

We investigate the effect of the node embedding dimension of each model in the datasets listed in Table~\ref{tab:datasets}.
For this experiment, we vary the dimension of hidden and final node embeddings from $8$ to $128$, and observe the trend of AUC in the link sign prediction task.
As shown in Figure~\ref{fig:exp:effect_dim}, \method outperforms its competitors over all the tested dimensions, and it is relatively less sensitive to the embedding dimension than other models in all datasets except Bitcoin-Alpha.

\subsection{Hyperparameter Configuration}
\label{sec:appendix:hyperparameter}
\def\arraystretch{1.2}
\setlength{\tabcolsep}{6pt}
\begin{table*}[!h]
\small
\begin{threeparttable}[t]
	\caption{
		\label{tab:hyper}
		We summarize the configurations of \method's hyperparameters, which are used in the experiments of this paper.
		\vspace{-2mm}
	}
\begin{tabular}{l | ccccc}
\toprule
\multicolumn{1}{c|}{\textbf{Hyperparameter}} & \textbf{Bitcoin-Alpha} & \textbf{Bitcoin-OTC} & \textbf{Slashdot} & \textbf{Epinions} & \textbf{Wikipedia} \\
\midrule
Number $L$ of SGD layers                        & 1                      & 2                    & 2                 & 2                 & 2                  \\
Local injection ratio $c$                       & 0.35                   & 0.25                 & 0.55              & 0.55              & 0.5                \\
Number $K$ of diffusion steps                  & \multicolumn{5}{c}{10}                                                                                     \\
\midrule
Input feature dimension $d_{i}$                   & \multicolumn{5}{c}{128}                                                                                    \\
Hidden embedding dimension $d_{l}$                            & \multicolumn{5}{c}{32}                                                                                     \\
Optimizer                                   & \multicolumn{5}{c}{Adam (learning rate: 0.01, weight decay $\lambda$: 0.001)}                                                                                   \\
Number of epochs                            & \multicolumn{5}{c}{100}
\\
\bottomrule
\end{tabular}
\end{threeparttable}
\vspace{-2mm}
\end{table*}

\end{document}